\theoremstyle{thmstyleone}%
\newtheorem{thm}{Theorem} 
\newtheorem{lemma}[thm]{Lemma} 
\newtheorem{corollary}[thm]{Corollary}
\theoremstyle{thmstyletwo}%
\theoremstyle{thmstylethree}%
\DeclareMathOperator{\sinc}{sinc}
\newsavebox{\wmat}% Box to store smallmatrix content
\savebox{\wmat}{$\mathbf{w}=\begin{bmatrix}
5\\
5
\end{bmatrix}$}
\begin{document}

\title[Nonlinear Neurons with Human-like Apical Dendrite Activations]{Nonlinear Neurons with Human-like Apical Dendrite Activations}

%%=============================================================%%
%% Prefix	-> \pfx{Dr}
%% GivenName	-> \fnm{Joergen W.}
%% Particle	-> \spfx{van der} -> surname prefix
%% FamilyName	-> \sur{Ploeg}
%% Suffix	-> \sfx{IV}
%% NatureName	-> \tanm{Poet Laureate} -> Title after name
%% Degrees	-> \dgr{MSc, PhD}
%% \author*[1,2]{\pfx{Dr} \fnm{Joergen W.} \spfx{van der} \sur{Ploeg} \sfx{IV} \tanm{Poet Laureate} 
%%                 \dgr{MSc, PhD}}\email{iauthor@gmail.com}
%%=============================================================%%

\author[1]{\fnm{Mariana-Iuliana} \sur{Georgescu}}\email{georgescu{\_}lily@yahoo.com}

\author*[1]{\fnm{Radu Tudor} \sur{Ionescu}}\email{raducu.ionescu@gmail.com}

\author[2]{\fnm{Nicolae-C\u{a}t\u{a}lin} \sur{Ristea}}\email{r.catalin196@yahoo.ro}

\author[3]{\fnm{Nicu} \sur{Sebe}}\email{niculae.sebe@unitn.it}

\affil*[1]{\orgdiv{Department of Computer Science}, \orgname{University of Bucharest}, \orgaddress{\city{Bucharest}, \country{Romania}}}

\affil[2]{\orgdiv{Department of Telecommunications}, \orgname{University Politehnica of Bucharest}, \orgaddress{\city{Bucharest}, \country{Romania}}}

\affil[3]{\orgdiv{Department of Information Engineering and Computer Science}, \orgname{University of Trento}, \orgaddress{\city{Trento}, \country{Italy}}}

%%==================================%%
%% sample for unstructured abstract %%
%%==================================%%

\abstract{In order to classify linearly non-separable data, neurons are typically organized into multi-layer neural networks that are equipped with at least one hidden layer. Inspired by some recent discoveries in neuroscience, we propose a new model of artificial neuron along with a novel activation function enabling the learning of nonlinear decision boundaries using a single neuron. We show that a standard neuron followed by our novel apical dendrite activation (ADA) can learn the XOR logical function with 100\% accuracy. Furthermore, we conduct experiments on six benchmark data sets from computer vision, signal processing  and natural language processing, i.e.~MOROCO, UTKFace, CREMA-D, Fashion-MNIST, Tiny ImageNet and ImageNet, showing that the ADA and the leaky ADA functions provide superior results to Rectified Linear Units (ReLU), leaky ReLU, RBF and Swish, for various neural network architectures, e.g.~one-hidden-layer or two-hidden-layer multi-layer perceptrons (MLPs) and convolutional neural networks (CNNs) such as LeNet, VGG, ResNet and Character-level CNN. We obtain further performance improvements when we change the standard model of the neuron with our pyramidal neuron with apical dendrite activations (PyNADA). Our code is available at: \url{https://github.com/raduionescu/pynada}.}

\keywords{neural networks, pyramidal neurons, activation function, transfer function, deep learning.}

%%\pacs[JEL Classification]{D8, H51}

%%\pacs[MSC Classification]{35A01, 65L10, 65L12, 65L20, 65L70}

\maketitle

\section{Introduction}
\label{sec_intro}

The power of neural networks \citep{Khan-AIR-2020} in classifying linearly non-separable data lies in the use of multiple (at least two) layers. We take inspiration from the recent study of Gidon et al.~\cite{Gidon-S-2020} and propose a simpler yet more effective approach: a new computational model of the neuron, termed \emph{pyramidal neuron with apical dendrite activations} (PyNADA), along with a novel activation function, termed \emph{apical dendrite activation} (ADA), allowing us to classify linearly non-separable data using an individual neuron.

\noindent {\bf Biological motivation.}
Recently, Gidon et al.~\cite{Gidon-S-2020} observed that the apical dendrites of pyramidal neurons in the human cerebral cortex have a different activation function than what was previously known from observations on rodents. The newly-discovered apical dendrite activation function produces maximal amplitudes for electrical currents close to threshold-level stimuli and dampened amplitudes for stronger electrical currents, as shown in Figure~\ref{fig_adaf1}. This new discovery indicates that an individual pyramidal neuron from the human cerebral cortex can classify linearly non-separable data, contrary to the conventional belief that nonlinear problems require multi-layer neural networks. This is the main reason that motivated us to propose ADA and PyNADA.

\begin{figure}[!b]
%\vskip 0.2in
\begin{center}
\centering
\subfloat[Activation function observed in apical dendrites of pyramidal neurons in the human cerebral cortex.]
{
	\includegraphics[width=0.4\linewidth]{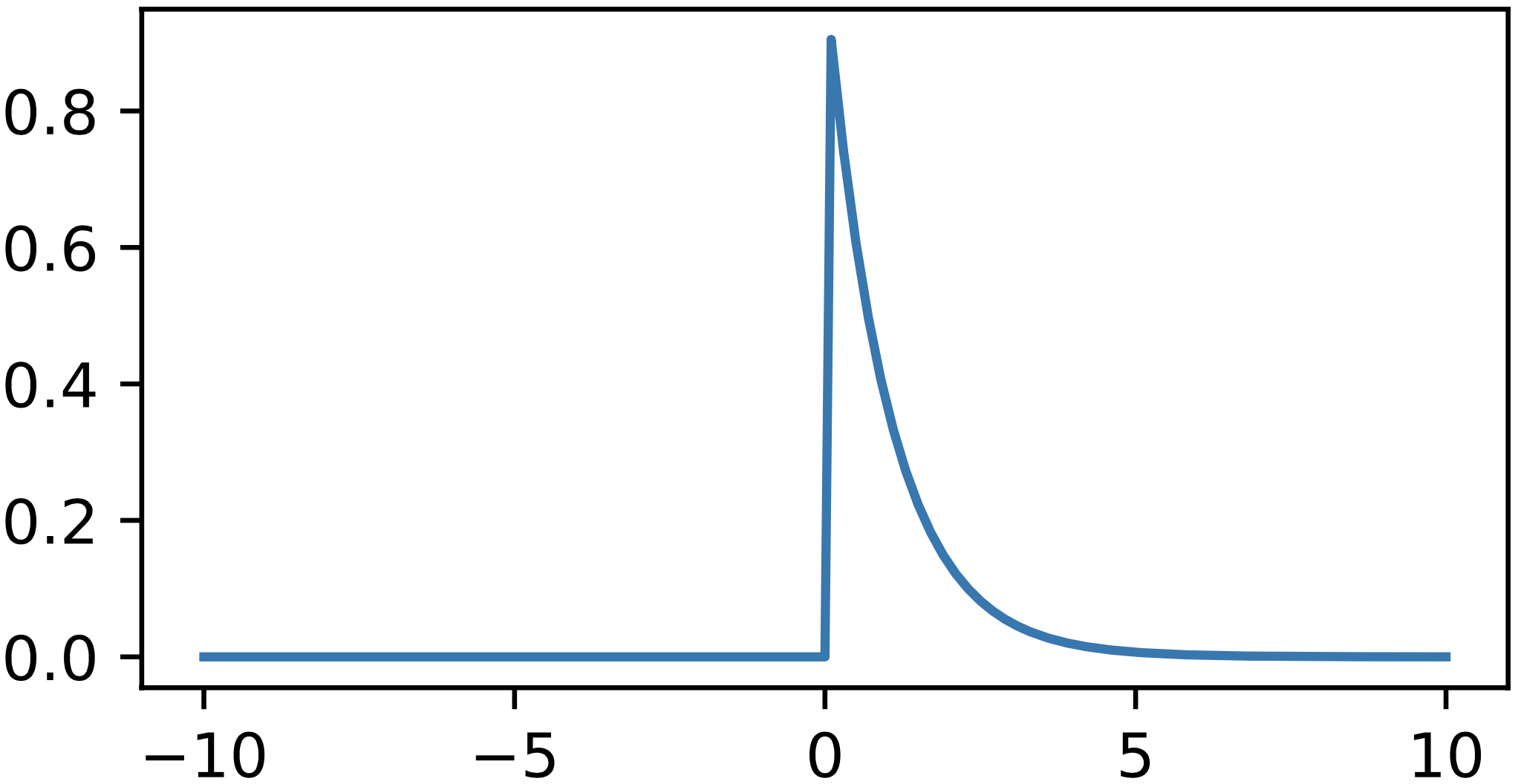}
    \label{fig_adaf1} 
}
\hspace{0.06\linewidth}
\subfloat[Our leaky apical dendrite activation (ADA) function that can be expressed in closed form. This output is obtained by setting $l = 0.005$, $\alpha=1$ and $c=1$ in Equation~\eqref{eq_leaky_adaf3}.]
{
 	\includegraphics[width=0.4\linewidth]{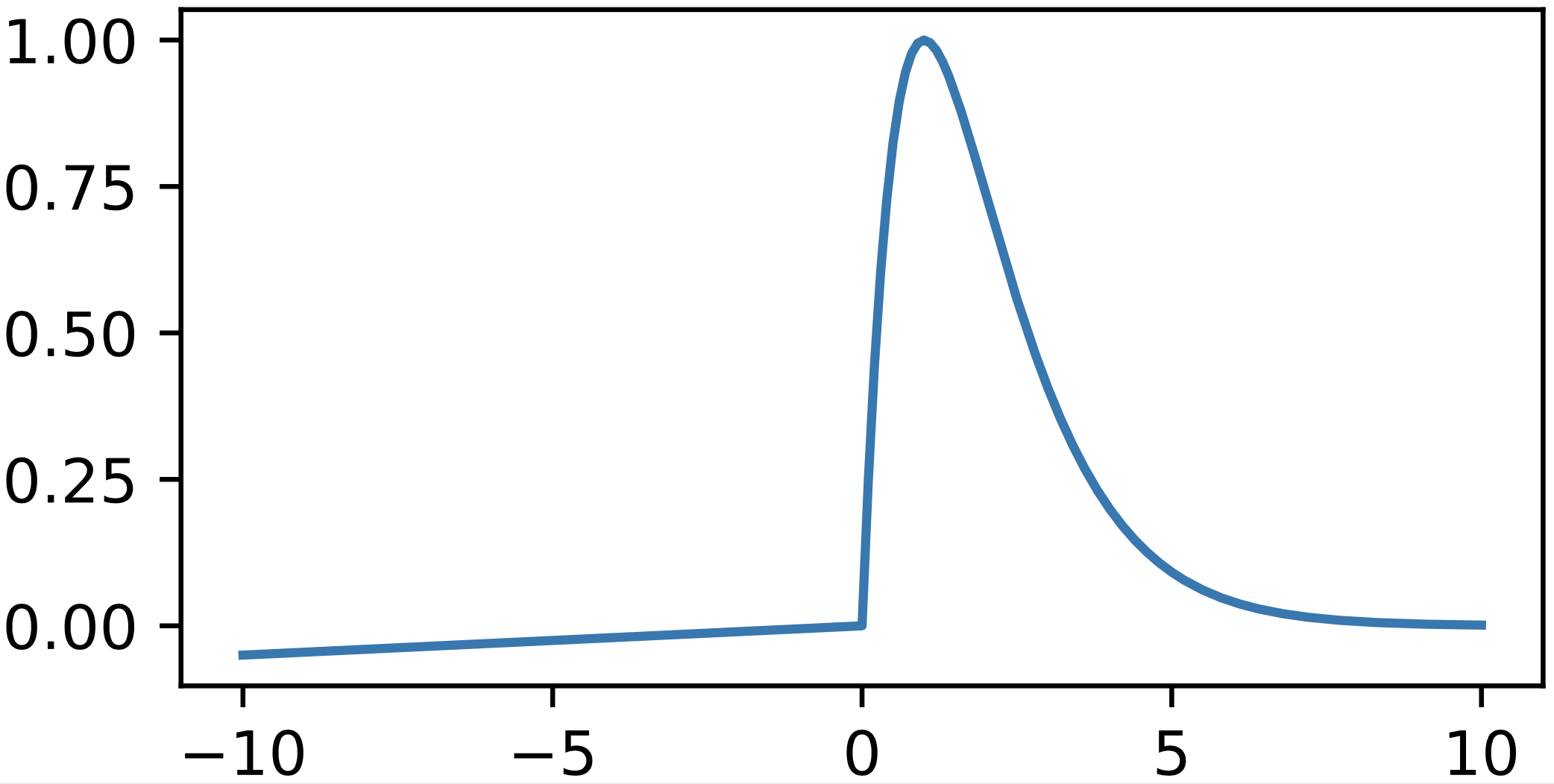}
    \label{fig_leaky_adaf3} 
}
%\vskip -0.02in
\caption{Original~\citep{Gidon-S-2020} and proposed versions of the apical dendrite activation. The input corresponds to the horizontal axis and the output to the vertical axis.}
\label{fig_both_adaf}
\end{center}
\vskip -0.2in
\end{figure}

% \vspace{-0.15cm}
\noindent {\bf Psychological motivation.} 
Remember the first time you ate your favorite dish. Was it better than the second or the last time you ate the same dish? According to Knutson et al.~\cite{Knutson-N-2006}, our brains provide higher responses to novel stimuli than to known (repetitive) stimuli. This means that our brains get bored while eating the same dish over and over again, although the dish might be our favorite. If we were to model the brain response over time for a certain stimulus, we would obtain the function illustrated in Figure~\ref{fig_adaf1}. This is yet another reason to propose and experiment with ADA and PyNADA in a computational framework based on deep neural networks, which try to mimic the brain.

% \vspace{-0.15cm}
\noindent {\bf Mathematical motivation.}
Despite the recent significant advances brought by deep learning \citep{LeCun-N-2015} in various application domains \citep{Xu-CVIU-2017,Wang-N-2018}, state-of-the-art deep neural networks rely on an old and simple mathematical model of the neuron introduced by Rosenblatt \cite{Rosenblatt-PR-1958}. Minsky and Papert \cite{Minsky-MP-2017} argued that a single artificial neuron is incapable of learning nonlinear functions, such as the XOR function. In order to classify linearly non-separable data, standard artificial neurons are typically organized into multi-layer neural networks that are equipped with at least one hidden layer. Contrary to the common belief, we propose an activation function (ADA) that transforms a single artificial neuron into a nonlinear classifier. We also prove that the nonlinear neuron can learn the XOR logical function with 100\% accuracy. Hence, the ADA function can increase the computational power of individual artificial neurons.

% \vspace{-0.15cm}
\noindent {\bf Empirical motivation.}
We provide empirical evidence in favor of replacing the commonly-used (e.g.~Rectified Liner Units (ReLU) \citep{Nair-ICML-2010} and leaky ReLU \citep{Maas-WDLASL-2013}) or the recently-proposed (e.g.~Swish \citep{Ramachandran-ICLRW-2018}) activation functions \citep{Apicella-NN-2021,Dubey-NC-2021} with the ones proposed in this work, namely ADA and leaky ADA, in various neural network architectures ranging from one-hidden-layer or two-hidden-layer multi-layer perceptrons (MLPs) to convolutional neural networks (CNNs) such as LeNet \citep{LeCun-PI-1998}, VGG \citep{Simonyan-ICLR-2014}, ResNet \citep{He-CVPR-2016} and Character-level CNN \citep{Zhang-NIPS-2015}. We obtain accuracy improvements on several tasks: object class recognition on Fashion-MNIST \citep{Xiao-A-2017}, ImageNet \citep{Russakovsky-IJCV-2015} and Tiny ImageNet \citep{Russakovsky-IJCV-2015}; gender prediction and age estimation on UTKFace \citep{Zhang-CVPR-2017}; voice emotion recognition on CREMA-D \citep{Cao-TAC-2014}; Romanian dialect identification on MOROCO \citep{Butnaru-ACL-2019}. We report further accuracy improvements when the standard artificial neurons are replaced with our pyramidal neurons with apical dendrite activations.

% \vspace{-0.15cm}
\noindent {\bf Contribution.}
In summary, our contribution is threefold:
\begin{itemize}
\item We propose a new artificial neuron called the pyramidal neuron with apical dendrite activation (PyNADA), along with a new activation function called the apical dendrite activation (ADA).
\item We demonstrate that, due to the novel apical dendrite activation, a single neuron can learn the XOR logical function.
\item We show that the proposed neural building blocks, ADA and PyNADA, provide superior results compared to standard neurons based on the notorious ReLU and leaky ReLU activations, for a broad range of tasks and neural architectures. In most cases, our improvements are statistically significant.
\end{itemize}

\noindent {\bf Organization.} The remainder of this work is organized as follows. In Section \ref{sec_related_art}, we discuss related articles presenting activation functions and artificial neuron models. In Section \ref{sec_method}, we present our novel apical dendrite activation function and pyramidal neuron. In Section \ref{sec_experiments}, we present a comprehensive set of experiments with multiple neural networks on several data sets. Finally, we draw our conclusions and point out future work directions in Section \ref{sec_conclusion}.
 
\section{Related Work}
\label{sec_related_art}

Since our work introduces a novel activation function as well as a novel type of artificial neuron, we consider articles presenting activation functions and artificial neuron models as closely related to our work. Therefore, we discuss related articles on activation functions in Section \ref{sec_related_activations}, and papers presenting models of artificial neurons in Section \ref{sec_related_models}.

\subsection{Activation Functions}
\label{sec_related_activations}

Since activation functions \citep{Apicella-NN-2021,Dubey-NC-2021} have a large impact on the performance of deep neural networks (DNNs), studying and proposing new activation functions is an interesting and important topic \citep{Hayou-ICML-2019}. Nowadays, perhaps the most popular activation function is ReLU \citep{Nair-ICML-2010,Apicella-NN-2021,Dubey-NC-2021}. Formally, ReLU is defined as $max(0, x)$, where $x$ is a scalar input. Because ReLU is linear on the positive side (for $x>0$), its derivative is $1$, so it does not saturate like \emph{sigmoid} and \emph{tanh}. On the negative side of the domain, ReLU is constant, so the gradient is $0$. Hence, a neuron that uses ReLU as activation function cannot update its weights via gradient-based methods on examples for which the neuron is inactive. %If we have neurons that never activate in a neural network, we risk not being able to train the neural model using gradient-based methods. 
To eliminate the problem caused by inactive neurons with ReLU activation, Maas et al.~\cite{Maas-WDLASL-2013} introduced leaky ReLU. %The leaky ReLU allows non-zero gradient when the unit is inactive. 
The leaky ReLU function is defined as:
\begin{equation}\label{eq_leaky_relu}
\begin{split}
y = \mbox{ReLU}_{\mbox{\scriptsize{leaky}}}(x,l) = l \cdot \min(0, x) + \max(0,x),
\end{split}
\end{equation}
where $l$ is a number between $0$ and $1$ (typically very close to $0$), allowing the gradient to pass even if $x<0$. While in leaky ReLU the leak parameter $l$ is kept fixed, He et al.~\cite{He-ICCV-2015} proposed Parametric Rectified Linear Units (PReLU), in which the leak parameter $l$ is learned by back-propagation.
Different from ReLU, the Exponential Linear Unit (ELU) \citep{Clevert-ICLR-2016} outputs negative values, while still avoiding the vanishing gradient problem on the positive side of the domain. This helps to bring the mean unit activation down to zero, enabling faster convergence times.
Another generalization of ReLU is the Maxout unit \citep{Goodfellow-ICML-2013}, which, instead of applying an element-wise function, divides the input into $k$ groups of values and then outputs the maximum value across all groups.

In contrast to most recent (ReLU, PReLU, ELU, etc.) and historically-motivated (\emph{sign}, \emph{sigmoid}, \emph{tanh}, etc.) activation functions \citep{Apicella-NN-2021,Dubey-NC-2021}, we propose an activation function that transforms a single artificial neuron into a nonlinear classifier. To support our statement, we prove that a neuron followed by our apical dendrite activation function can learn the XOR logical function. We note that there are other activation functions, e.g.~Swish \citep{Ramachandran-ICLRW-2018} and Radial Basis Function (RBF), that generate nonlinear decision boundaries. Different from Swish and RBF, our activation function is supported by recent neuroscience discoveries \citep{Gidon-S-2020}. In addition, our experiments show that ADA and leaky ADA generally provide superior performance levels.  

\subsection{Models of Artificial Neurons}
\label{sec_related_models}

To the best of our knowledge, one of the first mathematical models of the biological neuron is the perceptron \citep{Rosenblatt-PR-1958}. %The perceptron solves linearly separable problems by computing a linear combination of the input and the weights, then adding the bias. The result is passed through the \emph{sign} transfer function, obtaining the final output. 
The Rosenblatt's perceptron was introduced along with a rule for updating the weights, which converges to a solution only if the data set is linearly separable. Although the perceptron is a simple and old model, it represents the foundation of modern DNNs.
Different from the Rosenblatt's perceptron, the Adaptive Linear Neuron (ADALINE) \citep{Widrow-TR-1960} updates its weights via stochastic gradient descent, back-propagating the error before applying the sign function. ADALINE has the same disadvantage as Rosenblatt's perceptron, namely that it cannot produce nonlinear decision boundaries. %Stochastic gradient descent is currently the most popular algorithm of training DNNs, although some researchers agree that it is not the best choice, lacking biological motivation \cite{Lee-ECML-2015}.

More recently, researchers proposed the artificial spiking neuron \citep{Anwani-NC-2020,Sarkar-NE-2022,Tavanaei-NN-2019}, a model where the current state of the neuron is determined by the membrane potential, which can raise or decline for a period of time due to electrical impulses. To date, spiking neural network models are computationally expensive, being commonly avoided in real-world applications because of this problem.

In contrast to existing models of artificial neurons \citep{Rosenblatt-PR-1958,Widrow-TR-1960,Anwani-NC-2020,Sarkar-NE-2022,Tavanaei-NN-2019}, we propose an artificial neuron that has two input branches, the basal branch and the apical tuft. The apical tuft is particularly novel because it uses a novel activation function \citep{Gidon-S-2020} that can solve nonlinearly separable problems.

There are a few works that studied various aspects of the modeling of pyramidal neurons, e.g.~segregated dendrites in the context of deep learning \citep{Guerguiev-EL-2017} or memorizing sequences with active dendrites and multiple integration zones \citep{Hawkins-FNC-2016}. Inspired by the recent discovery of Gidon et al.~\cite{Gidon-S-2020}, to the best of our knowledge, we are the first to propose a human-like artificial pyramidal neuron. Different from previous studies, the apical tuft of our pyramidal neuron is equipped with the novel apical dendrite activation suggested by Gidon et al.~\cite{Gidon-S-2020}. Furthermore, we integrate our pyramidal neuron into various deep neural architectures, showing its benefits over standard artificial neurons. We note that Gidon et al.~\cite{Gidon-S-2020} have not presented the pyramidal neuron in a computational scenario. Hence, we are the first to model it computationally. 

\vspace{-0.15cm}
\section{Method}
\label{sec_method}

We would like to emphasize that the proposed apical dendrite activation simulates a particular feature of the human brain, namely a class of calcium-mediated dendritic action potentials observed by analyzing the dendrites of layer 2/3 pyramidal neurons of the human cerebral cortex, as detailed by Gidon et al.~\cite{Gidon-S-2020}. Based on their neuroscientific discovery, we propose the ADA and leaky ADA functions, as well as the pyramidal neuron with apical dendrite activations (PyNADA). All our contributions represent novel building blocks that can be used to construct more powerful neural network architectures.

We next describe the proposed activation functions and pyramidal neuron. Our presentation starts with an introduction of the activation function in Section \ref{sec_method_ADA}, which includes a proof of our statement regarding the capability of solving nonlinear problems, in particular the XOR classification problem, with a single neuron activated by our function. Our presentation continues with a description of the proposed pyramidal neuron in Section \ref{sec_method_PyNADA}. Finally, in Section {\ref{sec_method_arch}}, we provide examples demonstrating how to integrate ADA and PyNADA into a neural network.

\subsection{ADA: Apical Dendrite Activation Function}
\label{sec_method_ADA}

The activation function illustrated in Figure~\ref{fig_adaf1} is deduced from the ex vivo experiments conducted by Gidon et al.~\cite{Gidon-S-2020} on the human cerebral cortex. It can be formally expressed as follows:
\begin{equation}\label{eq_adaf1}
\begin{split}
y = \left\{\begin{array}{ll} 0, & \mbox{if} \; x < 0 \\ \exp(-x), & \mbox{if} \; x \geq 0 \end{array}\right. ,
\end{split}
\end{equation}
where $x \in \mathbb{R}$ is the input of the activation function and $y$ is the output. 

However, the function defined in Equation~\eqref{eq_adaf1} is not directly useful in practice, because commonly-used deep learning frameworks such as TensorFlow \citep{Abadi-OSDI-2016} and PyTorch \citep{Paszke-NeurIPS-2019} do not cope well with functions containing \emph{if} branches. Consequently, we propose a closed form definition that approximates the activation function defined in Equation~\eqref{eq_adaf1}, as follows: 
\begin{equation}\label{eq_adaf3}
\begin{split}
y = \mbox{ADA}(x, \alpha, c) = \max(0, x) \cdot \exp(-x\!\cdot\!\alpha + c),
\end{split}
\end{equation}
where $x \in \mathbb{R}$ is the input of the activation function, $\alpha > 0$ is a parameter that controls the width of the peak, $c > 0$ is a constant that controls the height of the peak and $y$ is the output. The input $x$ of the activation function is the result of summing the weighted inputs of an artificial neuron. In the remainder of this section, we use bold letters to denote vectors and matrices. Given an input $\mathbf{x} \in \mathbb{R}^n$ and the learnable weights $\mathbf{w} \in \mathbb{R}^n$ and $b \in \mathbb{R}$ of an artificial neuron, the input of the ADA function is computed as follows:
\begin{equation}\label{eq_ada_input}
\begin{split}
x=\mathbf{x} \cdot \mathbf{w} + b.
\end{split}
\end{equation}
From Equations \eqref{eq_adaf3} and \eqref{eq_ada_input}, we can obtain the output of the neuron activated by ADA illustrated in Figure {\ref{fig_ADA_neuron}}, as follows:
\begin{equation}\label{eq_adaf_neuron}
\begin{split}
y = \mbox{ADA}(\mathbf{x} \cdot \mathbf{w} + b, \alpha, c).
\end{split}
\end{equation}

\begin{figure}[!t]
%\vskip 0.2in
\begin{center}
\centerline{\includegraphics[width=0.75\columnwidth]{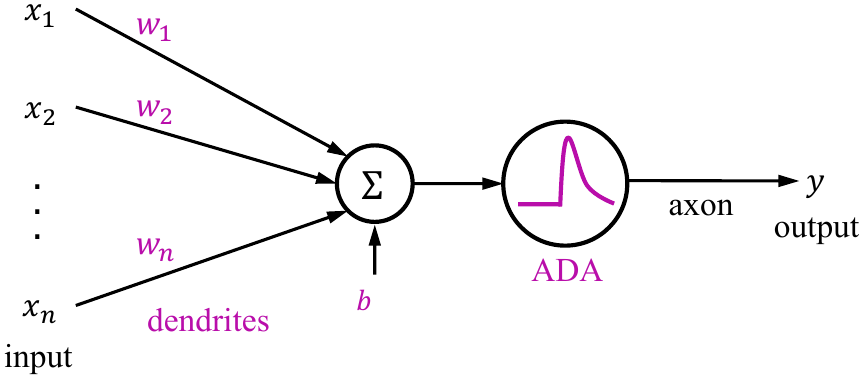}}
%\vskip -0.05in
\caption{An artificial neuron with apical dendrite activation (ADA). The neuron computes the dot product between the input $\mathbf{x} \in \mathbb{R}^n$ and the weight vector $\mathbf{w} \in \mathbb{R}^n$, and passes the result through the ADA function.}
\label{fig_ADA_neuron}
\end{center}
%\vskip -0.35in
\end{figure}

Similar to ReLU, our apical dendrite activation (ADA) is saturated on the negative side, i.e.~its gradients are equal to zero for $x < 0$. Thus, a neural model trained with back-propagation \citep{Rumelhart-N-1986} would not update the corresponding weights. We therefore propose leaky ADA, a more generic version that avoids saturation on the negative side, just as leaky ReLU. We formally extend the definition of ADA from Equation~\eqref{eq_adaf3} to leaky ADA as follows:
\begin{equation}\label{eq_leaky_adaf3}
\begin{split}
y\!=\!\mbox{ADA}_{\mbox{\scriptsize{leaky}}}(x, \alpha, c, l) = l\!\cdot\!\min(0,x) + \mbox{ADA}(x, \alpha, c),
\end{split}
\end{equation}
where $0 \leq l \leq 1$ is the leak parameter controlling the function steepness on the negative side and the other parameters are the same as in Equation~\eqref{eq_adaf3}. By setting $l = 0.005$, $\alpha=1$ and $c=1$ in Equation~\eqref{eq_leaky_adaf3}, we obtain the activation function illustrated in Figure~\ref{fig_leaky_adaf3}. By comparing Figure~\ref{fig_adaf1} and Figure~\ref{fig_leaky_adaf3}, we observe that the (leaky) ADA function defined in Equation~\eqref{eq_leaky_adaf3} has a similar shape to the transfer function defined in Equation~\eqref{eq_adaf1}. Indeed, both functions have no activation or almost no activation when $x < 0$. Then, there is a high activation peak for small but positive values of $x$. Finally, the activation dampens along the horizontal axis, as $x$ gets larger and larger. In Algorithm \ref{alg_ADA}, we demonstrate how a neuron activated by ADA works. It essentially involves sequentially applying Eq.~\eqref{eq_ada_input} and Eq.~\eqref{eq_adaf3}. Note that a neuron activated by leaky ADA is formalized analogously, by replacing Eq.~\eqref{eq_adaf3} with Eq.~\eqref{eq_leaky_adaf3} in Algorithm \ref{alg_ADA}.

\begin{algorithm}[!t]
\caption{Neuron with Apical Dendrite Activation\label{alg_ADA}}
\small{
\textbf{Input}: 

$\mathbf{x}$ -- input vector.

$\mathbf{w}$ -- neural weights.

$b$ -- neural bias.

$\alpha$ -- parameter that controls the width of the activation peak.

$c$ -- parameter that controls the height of the peak.

\textbf{Computation}:

$x \leftarrow \mathbf{x} \cdot \mathbf{w} + b$ (apply Eq.~\eqref{eq_ada_input})

$y \leftarrow \max(0, x) \cdot \exp(-x\!\cdot\!\alpha + c)$ (apply Eq.~\eqref{eq_adaf3})

%\BlankLine
\textbf{Output}: 

$y$ -- the output of the neuron.
}
\end{algorithm}

We next demonstrate that the ADA function can solve a nonlinearly separable problem (XOR). Previously, this type of problems were solved by neural networks using two layers. ADA is more powerful than a typical activation function (ReLU, PReLU, ELU, sigmoid, tanh, etc.), enabling a single neuron to solve the XOR nonlinearly separable problem.

\begin{lemma}\label{prop_ADA_learn_XOR}
There exists an artificial neuron followed by the apical dendrite activation from Equation~\eqref{eq_adaf3} which can predict the labels for the XOR logical function, by rounding its output.
% \vspace{-0.15in}
\end{lemma}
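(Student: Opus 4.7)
The plan is to give a constructive argument: I will exhibit explicit weights, bias, and hyperparameters $(w_1,w_2,b,\alpha,c)$ for which the neuron followed by $ADA$ reproduces the four XOR outputs after rounding. The neuron computes the pre-activation $z = w_1 x_1 + w_2 x_2 + b$, and then the output $y = \max(0,z)\cdot\exp(-z\alpha + c)$.

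First I would exploit the symmetry of the XOR function in its two inputs and set $w_1 = w_2 = w$. Under this choice, the four possible inputs collapse to three distinct pre-activation values: $z_{00} = b$, $z_{01}=z_{10} = w+b$, and $z_{11} = 2w+b$. The goal is then to pick $w$, $b$, $\alpha$, $c$ satisfying three qualitative conditions: (i) $z_{00} < 0$, so that the $\max(0,\cdot)$ factor forces $y = 0$ at input $(0,0)$; (ii) $z_{01} > 0$ and $z_{01}\cdot\exp(-z_{01}\alpha + c)$ lies in $[0.5, 1.5)$, so it rounds to $1$; and (iii) $z_{11} > 0$ is large enough that the decaying exponential pulls the product $z_{11}\cdot\exp(-z_{11}\alpha + c)$ below $0.5$, so it rounds to $0$.

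Next I would try the simple candidate $w = 2$, $b = -1$, $\alpha = 1$, $c = 1$. Then $z_{00} = -1 < 0$, giving $y_{00} = 0$; $z_{01} = 1 > 0$ gives $y_{01} = 1\cdot\exp(-1+1) = 1$; and $z_{11} = 3$ gives $y_{11} = 3\cdot\exp(-3+1) = 3e^{-2} \approx 0.406 < 0.5$. Rounding each output recovers the XOR truth table, which proves the lemma.

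The main (minor) obstacle is really only the existence of a parameter window for condition (iii): the product $z\,\exp(-z\alpha + c)$ attains its maximum at $z = 1/\alpha$ with value $e^{c-1}/\alpha$, and then decays to $0$; one must make sure that for the chosen $(\alpha,c)$, the value at $z_{11}$ has already descended below $1/2$ while the value at $z_{01}$ is still at least $1/2$. With the choice above this is immediate, so no delicate estimate is needed; any reader can verify the four cases by direct substitution.
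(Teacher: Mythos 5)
Your proposal is correct and follows essentially the same constructive approach as the paper: exhibit explicit symmetric weights, a negative bias, and $(\alpha,c)=(1,1)$ so that the $(0,0)$ input is cut off by $\max(0,\cdot)$, the $(0,1)$ and $(1,0)$ inputs land at the activation peak (output $1$), and the exponential damping pushes the $(1,1)$ output below $0.5$. The only difference is the numerical choice ($w=2$, $b=-1$ giving $3e^{-2}\approx 0.41$ at $(1,1)$, versus the paper's $w=5$, $b=-4$ giving $6e^{-5}\approx 0.04$, which sits more comfortably below the rounding threshold), and your verification is valid.
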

\begin{proof}
Given an input data sample represented as a row vector $\mathbf{x} \in \mathbb{R}^n$, the output $y \in \mathbb{R}$ of an artificial neuron with ADA is obtained as follows:
\begin{equation}\label{eq_ADA_neuron1}
\begin{split}
y = \mbox{ADA}(\mathbf{x} \cdot \mathbf{w} + b, \alpha, c),
\end{split}
\end{equation}
where $\alpha$ and $c$ are defined as in Equation~\eqref{eq_adaf3}, $\mathbf{w}$ is the column weight vector and $b$ is the bias term. The following equation shows how to obtain the rounded output:
\begin{equation}\label{eq_ADA_neuron1_round}
\begin{split}
y = \left\lfloor \mbox{ADA}(\mathbf{x} \cdot \mathbf{w} + b, \alpha, c) \right\rceil,
\end{split}
\end{equation}
where $\lfloor \cdot \rceil$ is the rounding function. Similarly, we can obtain the rounded outputs for an entire set of data samples represented as row vectors in an input matrix $\mathbf{X}$:
\begin{equation}\label{eq_ADA_neuron2}
\begin{split}
\mathbf{Y} = \lfloor \mbox{ADA}(\mathbf{X} \cdot \mathbf{w} + b, \alpha, c) \rceil.
\end{split}
\end{equation}

Let $\mathbf{X}$ and $\mathbf{T}$ represent the data samples and the targets corresponding to the XOR logical function, i.e.:
\begin{equation}\label{eq_xor_data}
\begin{split}
\mathbf{X}=\begin{bmatrix}
0 & 0\\
0 & 1\\
1 & 0\\
1 & 1
\end{bmatrix}\!,\;
\mathbf{T}=\begin{bmatrix}
0\\
1\\
1\\
0
\end{bmatrix}\!.
\end{split}
\end{equation}

We next provide an example of weights and parameters\vspace{0.1cm} to prove our lemma. By setting $\mathbf{w}=\begin{bmatrix}
5\\
5
\end{bmatrix}\vspace{0.1cm}$, $b=-4$, $\alpha=1$ and $c=1$ in Equation~\eqref{eq_ADA_neuron2}, we obtain the following output:
\begin{equation}\label{eq_proof}
\begin{split}
\!\!\mathbf{Y}\!=\!\left\lfloor \mbox{ADA}\!\left(\!\begin{bmatrix}
0 & 0\\
0 & 1\\
1 & 0\\
1 & 1
\end{bmatrix}\!\cdot\!\begin{bmatrix}
5\\
5
\end{bmatrix}
-4, 1, 1\!\right)\!\right\rceil
\!=\!\left\lfloor \mbox{ADA}\!\left(\!\begin{bmatrix}
-4\\
1\\
1\\
6
\end{bmatrix}\!, 1, 1\!\right)\!\right\rceil\!\approx\!\left\lfloor \begin{bmatrix}
0\\
1\\
1\\
0.04
\end{bmatrix}
\right\rceil\!=\!\begin{bmatrix}
0\\
1\\
1\\
0
\end{bmatrix}\!.
\end{split}
\end{equation}
Since the output $\mathbf{Y}$ computed in Equation~\eqref{eq_proof} is equal to the target $\mathbf{T}$ defined in Equation~\eqref{eq_xor_data}, it results that Lemma~\ref{prop_ADA_learn_XOR} is true.
\end{proof}

\begin{figure}[!t]
%\vskip 0.2in
\begin{center}
\centerline{\includegraphics[width=0.6\columnwidth]{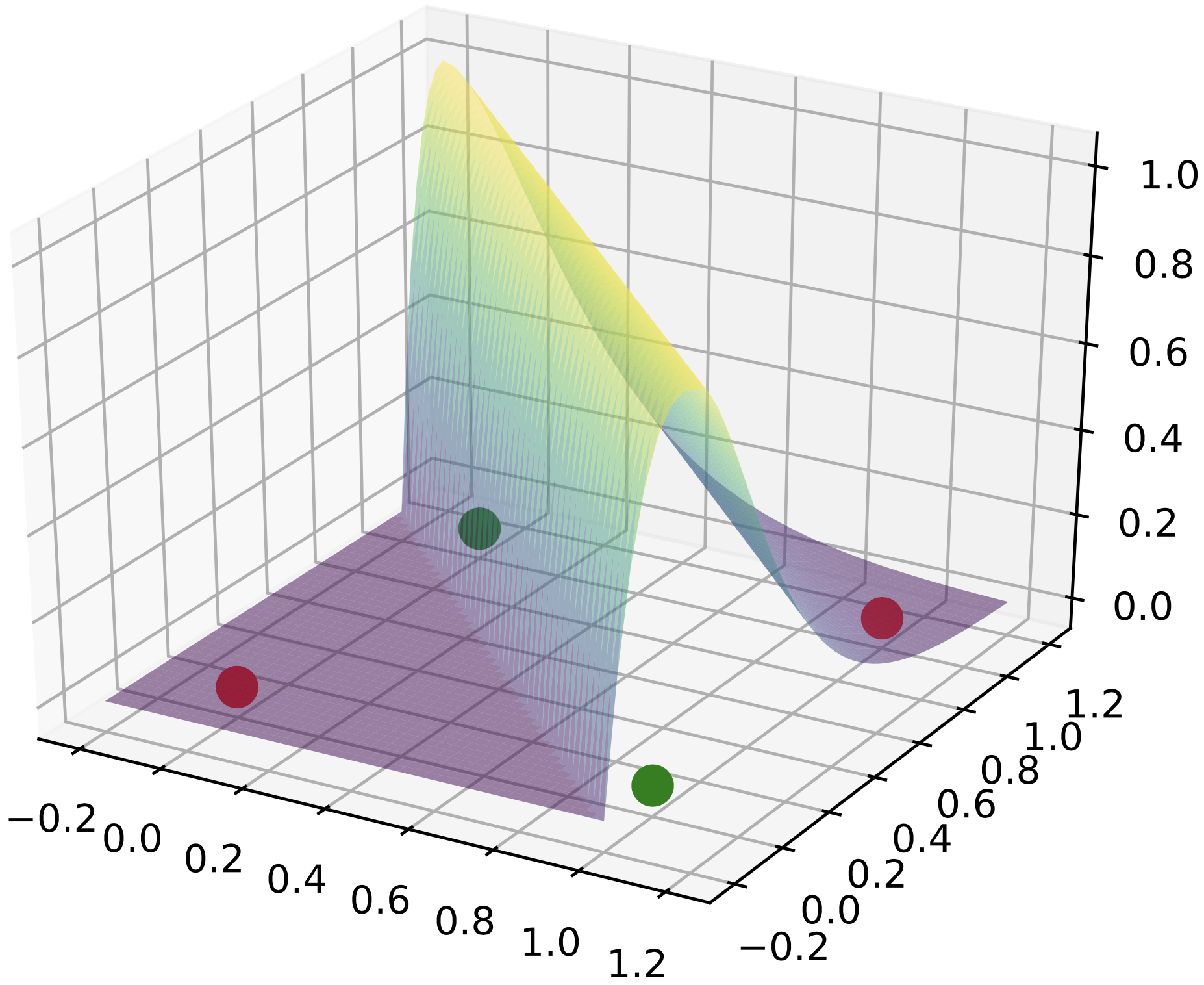}}
%\vskip -0.05in
\caption{The output of a neuron with apical dendrite activation, as defined in Equation~\eqref{eq_ADA_neuron1}, is able to classify the XOR logical function. The output is obtained by setting the weights to \usebox{\wmat}, the bias term to $b=-4$ and the parameters of the ADA function to $\alpha=1$ and $c=1$. Large output values (closer to $1$) correspond to the green data points labeled as class $1$, while low output values (closer to $0$) correspond to the red data points labeled as class $0$. Best viewed in color.}
\label{fig_xor}
\end{center}
%\vskip -0.1in
\end{figure}

Our proof is intuitively explained in Figure~\ref{fig_xor}. The four data points from the XOR data set are represented on a plane and the output of the neuron followed by ADA is represented on the axis perpendicular to the plane in which the XOR points reside. The output for the red points (labeled as class $0$) is $0$ or close to $0$, while the output for the green points (labeled as class $1$) is $1$. Applying the rounding function $\lfloor \cdot \rceil$ on top of the output depicted in Figure~\ref{fig_xor} is equivalent to setting a threshold equal to $0.5$, labeling all points above the threshold with class $1$ and all points below the threshold with class $0$. This gives us the labels for the XOR logical function.

Not surprisingly, ADA is still able to solve linearly separable functions, such as OR and AND, without requiring any modification to its definition. We demonstrate this below.

\begin{corollary}\label{prop_ADA_learn_OR}
There exists an artificial neuron followed by the apical dendrite activation from Equation~\eqref{eq_adaf3} which can predict the labels for the OR logical function, by rounding its output.
%\vspace{-0.1in}
\end{corollary}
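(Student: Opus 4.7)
The plan is to mirror the constructive argument used for Lemma~\ref{prop_ADA_learn_XOR}: exhibit explicit values of $w$, $b$, $\alpha$, and $c$ that make Equation~\eqref{eq_ADA_neuron2} produce the OR truth table after rounding. The OR targets differ from the XOR targets only at the input $(1,1)$, which must now round to $1$ instead of $0$. Because ADA satisfies $\max(0,x)\cdot e^{-\alpha x + c}$, the output vanishes as soon as the pre-activation is $\le 0$, while it peaks at $x=1/\alpha$ and then decays. So, the main design goal is to place the three positive XOR-like inputs on the ``rising'' or ``near-peak'' portion of ADA, and the (1,1) input not too far into the decaying tail.

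The first step is to kill the $(0,0)$ contribution by ensuring the pre-activation is $\le 0$ there; taking $b = 0$ makes $(0,0)\cdot w + b = 0$, and then $\max(0,0)=0$ already forces $ADA = 0$. The second step is to choose $w_1 = w_2$ so that $(0,1)$ and $(1,0)$ yield the same pre-activation, and pick that common value to land near the ADA peak with $\alpha = c = 1$, whose maximum $1$ is attained at $x = 1$. This naturally suggests $w = \begin{bmatrix}1 & 1\end{bmatrix}^{\top}$, $b = 0$, $\alpha = 1$, $c = 1$.

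Finally, I would verify the construction by direct substitution into Equation~\eqref{eq_ADA_neuron2}, exactly as in the proof of the lemma. The outputs come out as $0$, $1\cdot e^{0} = 1$, $1\cdot e^{0} = 1$, and $2\cdot e^{-1} \approx 0.736$; rounding yields $[0,1,1,1]^{\top}$, which matches the OR targets. The only subtle point, and what I would flag as the ``hard'' part, is the last entry: unlike in the XOR proof (where one wants $(1,1)$ dampened all the way back to $0$), here one needs the decay at $x = 2$ to stop \emph{above} $0.5$. Since $2/e > 0.5$, the choice $\alpha = c = 1$ is well within the safe regime; if desired, one can even note that any $\alpha \in (0, \ln 4)$ keeps $2\alpha\,e^{c-2\alpha} \ge 0.5$ when $c = \alpha$, giving a continuous family of valid parameter settings and confirming the conclusion is robust.
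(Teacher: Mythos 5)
Your construction is correct and takes essentially the same route as the paper, which also proves this corollary by reusing the constructive XOR argument with an explicit witness verified by direct substitution and rounding (the paper keeps $w=\begin{bmatrix}5 & 5\end{bmatrix}^{\top}$, $b=-4$ and changes to $\alpha=0.4$, $c=0.5$, whereas you keep $\alpha=c=1$ and change to $w=\begin{bmatrix}1 & 1\end{bmatrix}^{\top}$, $b=0$). One trivial slip in your optional robustness aside: with $c=\alpha$ the output at pre-activation $2$ is $2e^{c-2\alpha}=2e^{-\alpha}$, not $2\alpha e^{c-2\alpha}$, though your stated range $\alpha\in(0,\ln 4)$ matches the correct expression and the main argument is unaffected.
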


\begin{proof}
We can trivially prove Corollary~\ref{prop_ADA_learn_OR} by following the proof for Lemma~\ref{prop_ADA_learn_XOR}. We just have to set $\alpha$ and $c$ to different values, e.g.~$\alpha=0.4$ and $c=0.5$.
\end{proof}

\begin{corollary}\label{prop_ADA_learn_AND}
There exists an artificial neuron followed by the apical dendrite activation from Equation~\eqref{eq_adaf3} which can predict the labels for the AND logical function, by rounding its output.
%\vspace{-0.1in}
\end{corollary}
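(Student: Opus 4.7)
The plan is to follow the template established in the proof of Lemma~\ref{prop_ADA_learn_XOR} and in Corollary~\ref{prop_ADA_learn_OR}: exhibit a concrete choice of weights $w$, bias $b$, and ADA parameters $\alpha, c$ such that applying Equation~\eqref{eq_ADA_neuron2} to the four canonical input rows $(0,0), (0,1), (1,0), (1,1)$ yields, after rounding, the target vector $T_{\text{AND}} = [0, 0, 0, 1]^\top$.

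The first observation I would exploit is that AND, unlike XOR, is already linearly separable, so the construction should be substantially easier. Because $\max(0, x) = 0$ for any $x < 0$, the ADA output is identically $0$ on negative pre-activations, which already rounds to $0$. Hence I would pick $w$ and $b$ so that the three negative-target inputs $(0,0), (0,1), (1,0)$ all produce a strictly negative pre-activation $x \cdot w + b$, while the single positive-target input $(1,1)$ produces a pre-activation for which ADA returns a value that rounds to $1$.

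To fix the parameters, I would reuse $\alpha = 1$ and $c = 1$ exactly as in Lemma~\ref{prop_ADA_learn_XOR}, since then $ADA(v, 1, 1) = \max(0, v) \cdot \exp(-v + 1)$ attains its maximum value $1$ precisely at $v = 1$. A natural choice is therefore symmetric weights $w = [2, 2]^\top$ together with bias $b = -3$, because the pre-activations become $-3, -1, -1, 1$, respectively. The first three map to $0$ under ADA, and the fourth maps to $1 \cdot \exp(0) = 1$, so the rounded output vector equals $T_{\text{AND}}$ exactly (not merely approximately, as in Lemma~\ref{prop_ADA_learn_XOR}).

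I do not anticipate a genuine obstacle here. The only thing to verify is the arithmetic of the four pre-activations and the corresponding ADA values, which is a direct computation analogous to Equation~\eqref{eq_proof}. Once this matrix computation is displayed, the conclusion follows immediately, as in the previous corollary.
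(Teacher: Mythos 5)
Your proof is correct and follows essentially the same route as the paper: exhibit explicit parameters with $\alpha=1$, $c=1$ so that the three negative-target inputs yield negative pre-activations (hence ADA output $0$) and $(1,1)$ yields pre-activation exactly $1$ (hence ADA output exactly $1$). The paper simply keeps $w=\begin{bmatrix}5\\5\end{bmatrix}$ from Lemma~\ref{prop_ADA_learn_XOR} and changes the bias to $b=-9$, whereas you use $w=\begin{bmatrix}2\\2\end{bmatrix}$, $b=-3$; the two constructions are interchangeable.
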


\begin{proof}
We can trivially prove Corollary~\ref{prop_ADA_learn_AND} by following the proof for Lemma~\ref{prop_ADA_learn_XOR}. We just have to set the bias term to a different value, e.g.~$b=-9$.
\end{proof}

From Lemma~\ref{prop_ADA_learn_XOR}, Corollary~\ref{prop_ADA_learn_OR} and Corollary~\ref{prop_ADA_learn_AND}, it results that an artificial neuron followed by ADA has more computational power than a standard artificial neuron followed by sigmoid, ReLU or other commonly-used activation functions. More precisely, the ADA function enables individual artificial neurons to classify both linearly (e.g.~AND, OR) and nonlinearly (e.g.~XOR) separable data.

\subsection{PyNADA: Pyramidal Neurons with Apical Dendrite Activations}
\label{sec_method_PyNADA}

\begin{figure}[!t]
%\vskip 0.2in
\begin{center}
\centerline{\includegraphics[width=0.75\columnwidth]{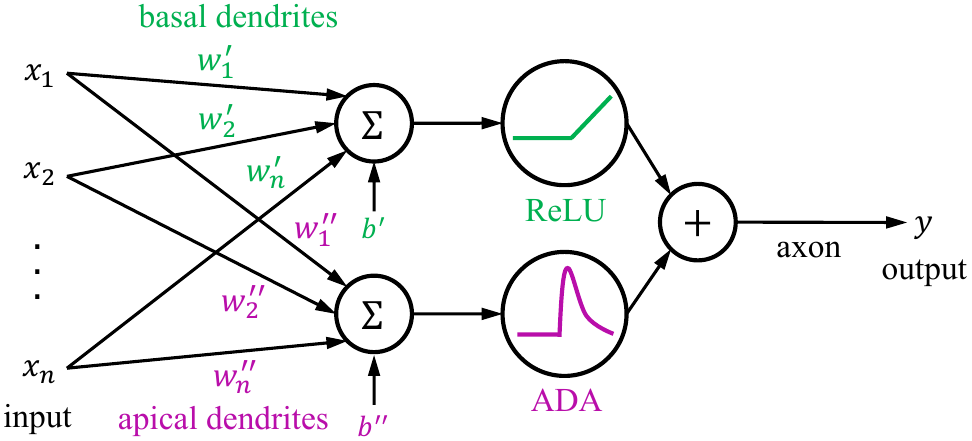}}
%\vskip -0.05in
\caption{A pyramidal neuron with apical dendrite activations (PyNADA). The input $\mathbf{x}$ goes through the basal dendrites followed by ReLU and through the apical tuft followed by ADA. The results are summed up and passed through the axon. Best viewed in color.}
\label{fig_PyNADA}
\end{center}
%\vskip -0.35in
\end{figure}

Pyramidal neurons have two types of dendrites: apical dendrites and basal dendrites. Electrical impulses are sent to the neuron through both kinds of dendrites and the impulse is passed down the axon, if an action potential occurs. Prior to Gidon et al.~\cite{Gidon-S-2020}, it was thought that apical and basal dendrites had identical activation functions. This is because experiments were usually conducted on pyramidal neurons extracted from rodents. In this context, proposing an artificial pyramidal neuron would not make much sense, because its mathematical model would be identical to a standard artificial neuron. Gidon et al.~\cite{Gidon-S-2020} observed that the apical dendrites of pyramidal neurons in the human cerebral cortex have a different (previously unknown) activation function, while the basal dendrites exhibit the well-known hard-limit transfer function. This observation calls for a new model of artificial pyramidal neurons. We therefore propose pyramidal neurons with apical dendrite activations (PyNADA).

Given an input data sample $\mathbf{x} \in \mathbb{R}^n$, the output $y \in \mathbb{R}$ of a PyNADA is obtained through the following equation:
\begin{equation}\label{eq_PyNADA}
\begin{split}
\!\!y = \mbox{ReLU}(\mathbf{x} \cdot \mathbf{w}' + b') + \mbox{ADA}(\mathbf{x} \cdot \mathbf{w}'' + b'', \alpha, c),
\end{split}
\end{equation}
where $\alpha$ and $c$ are defined as in Equation~\eqref{eq_adaf3}, $\mathbf{w}'$ and $\mathbf{w}''$ are column weight vectors and $b'$ and $b''$ are bias terms. A graphical representation of PyNADA is provided in Figure~\ref{fig_PyNADA}. In the proposed model, the input $\mathbf{x}$ is distributed to the basal dendrites represented by the weight vector $\mathbf{w}'$ and the bias term $b'$, and to the apical dendrites (apical tuft) represented by the weight vector $\mathbf{w}''$ and the bias term $b''$. Formally, the computation carried out by a pyramidal neuron with apical dendrite activation is detailed in Algorithm \ref{alg_PyNADA}.

% \vspace{-0.15cm}
For practical reasons, we replace the hard-limit transfer function, suggested by Gidon et al.~\cite{Gidon-S-2020} for the basal dendrites, with the ReLU activation. This change ensures that we can optimize the weights $\mathbf{w}'$ and the bias $b'$ through back-propagation, i.e.~we have at least some non-zero gradients. Since the intensity of electrical impulses is always positive, the biological model proposed by Gidon et al.~\cite{Gidon-S-2020} is defined for positive inputs and the thresholds for the activation functions are well above $0$. However, an artificial neuron can also take as input negative values, i.e.~$\mathbf{x} \in \mathbb{R}^n$. Hence, the thresholds of the activation functions used in PyNADA are set to $0$ (but the bias terms can shift these thresholds).

\begin{algorithm}[!t]
\caption{Pyramidal Neuron with Apical Dendrite Activation\label{alg_PyNADA}}
\small{
\textbf{Input}: 

$\mathbf{x}$ -- input vector.

$\mathbf{w}'$ -- neural weights for basal tuft.

$b'$ -- neural bias for basal tuft.

$\mathbf{w}''$ -- neural weights for apical tuft.

$b''$ -- neural bias for apical tuft.

$\alpha$ -- parameter that controls the width of the activation peak.

$c$ -- parameter that controls the height of the peak.

\textbf{Computation}:

$x' \leftarrow \mathbf{x} \cdot \mathbf{w}' + b'$ (apply Eq.~\eqref{eq_ada_input} for the basal tuft)

$x'' \leftarrow \mathbf{x} \cdot \mathbf{w}'' + b''$ (apply Eq.~\eqref{eq_ada_input} for the apical tuft)

$y \leftarrow \max(0,x') + \max(0, x'') \cdot \exp(-x''\!\cdot\!\alpha + c)$ (apply Eq.~\eqref{eq_PyNADA})

%\BlankLine
\textbf{Output}: 

$y$ -- the output of the pyramidal neuron.
}
\end{algorithm}

\begin{figure}[!t]
%\vskip 0.2in
\begin{center}
\centering
\subfloat[A one-hidden-layer MLP with neurons activated by ReLU.]
{
	\includegraphics[width=0.8\linewidth]{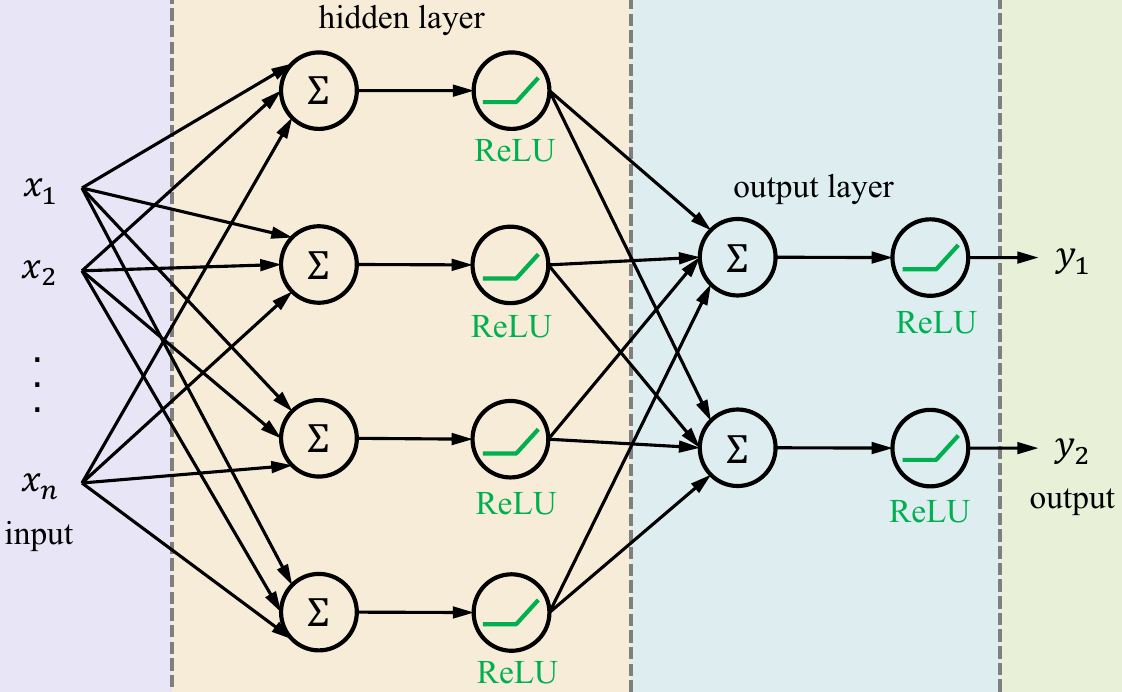}
    \label{fig_mlp_relu} 
}
\hspace{0.06\linewidth}
\subfloat[A one-hidden-layer MLP with neurons activated by ADA.]
{
 	\includegraphics[width=0.8\linewidth]{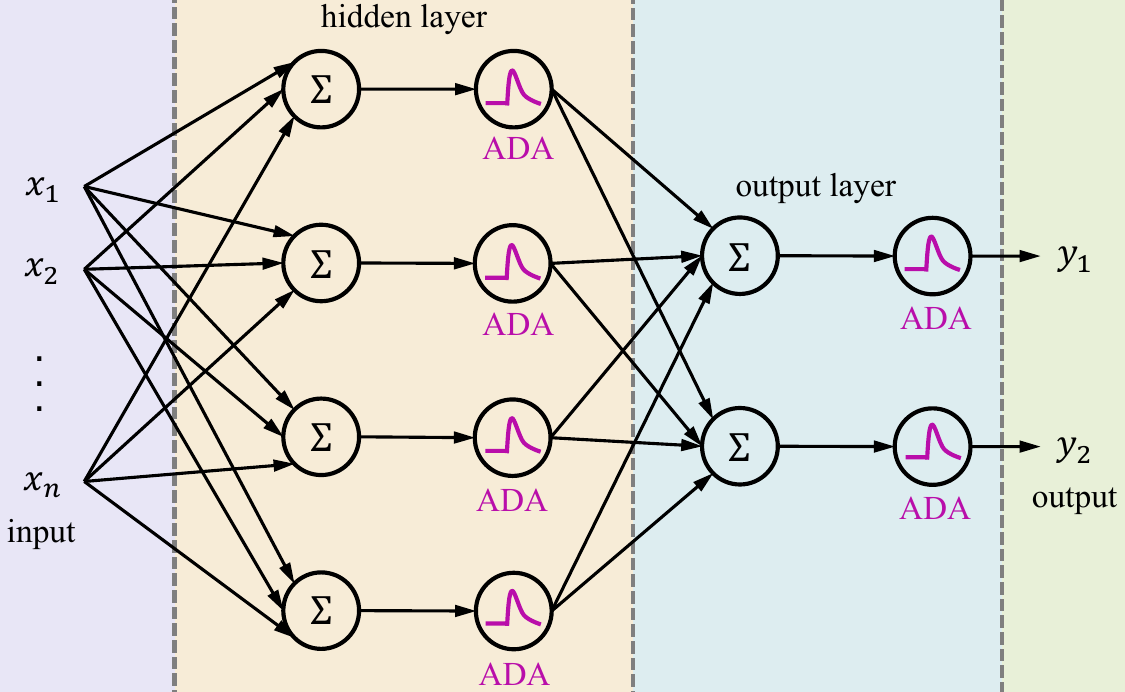}
    \label{fig_mlp_ada} 
}
%\vskip -0.02in
\caption{An example explaining how ReLU can be replaced with ADA in a standard neural network. Besides changing the activations, the neural architecture remains unchanged. Best viewed in color.}
\label{fig_mlp_relu_vs_ada}
\end{center}
%\vskip -0.2in
\end{figure}

\begin{figure}[!t]
%\vskip 0.2in
\begin{center}
\centering
\subfloat[A one-hidden-layer MLP with pyramidal neurons activated by ReLU on both basal and apical tufts, denoted as PyNReLU.]
{
	\includegraphics[width=0.8\linewidth]{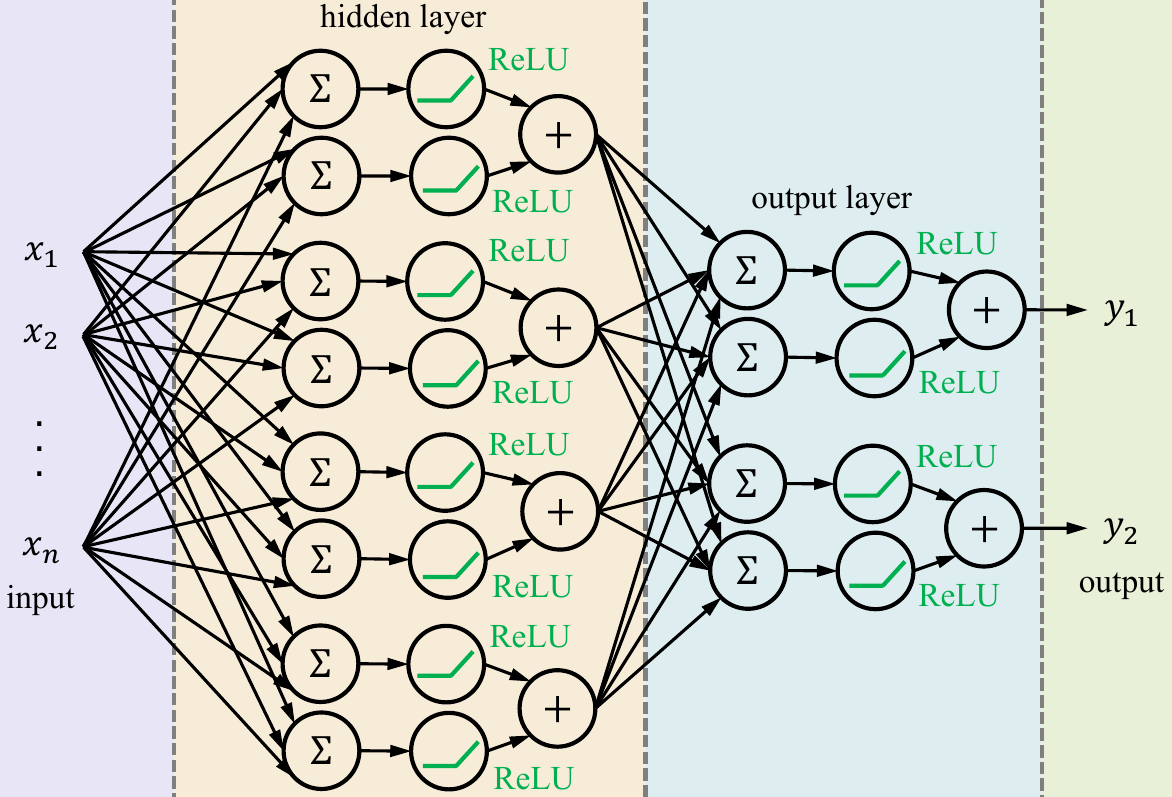}
    \label{fig_mlp_pynrelu} 
}
\hspace{0.06\linewidth}
\subfloat[A one-hidden-layer MLP with pyramidal neurons activated by ReLU and ADA, denoted as PyNADA.]
{
 	\includegraphics[width=0.8\linewidth]{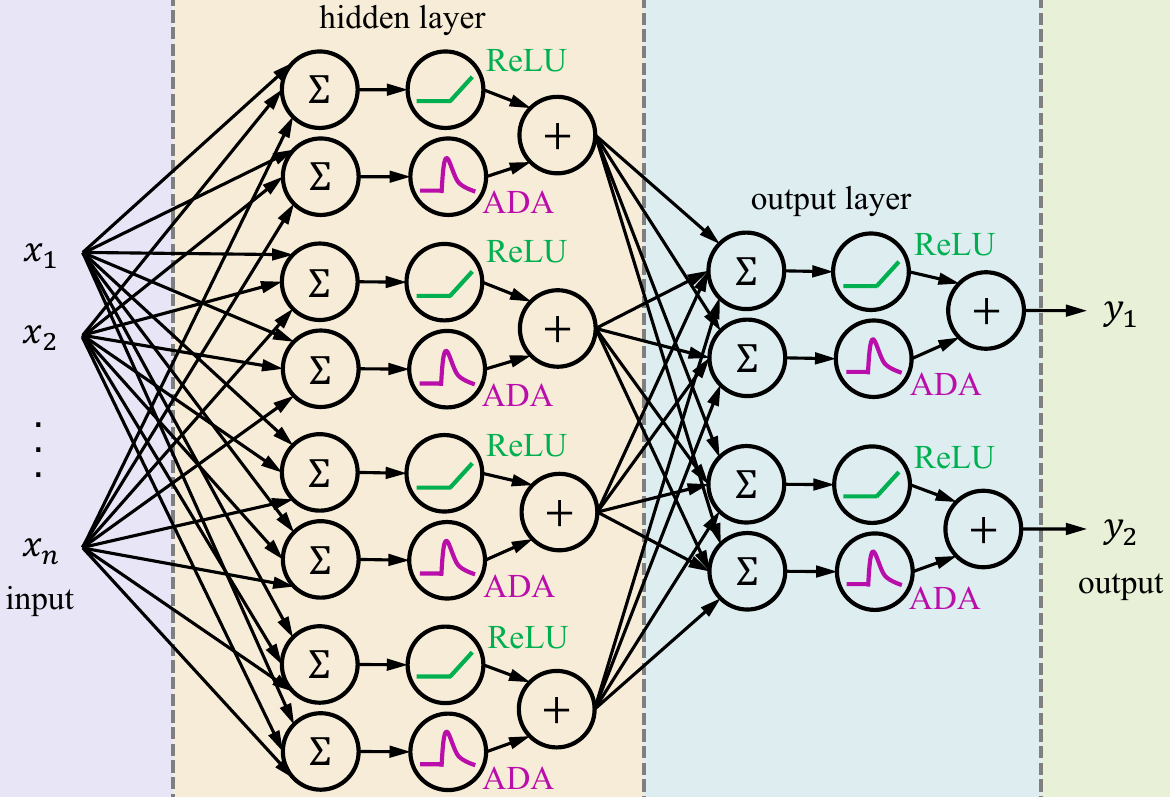}
    \label{fig_mlp_pynada} 
}
%\vskip -0.02in
\caption{An example explaining how to use PyNReLU and PyNADA units in a standard neural network. Besides switching between ReLU and ADA for the apical tuft, the neural architectures are identical. Best viewed in color.}
\label{fig_mlp_pynrelu_vs_pynada}
\end{center}
%\vskip -0.2in
\end{figure}

\begin{corollary}\label{prop_PyNADA_learn_XOR}
There exists a pyramidal neuron with apical dendrite activation, as defined in Equation~\eqref{eq_PyNADA}, which can predict the labels for the XOR logical function, by rounding its output.
\end{corollary}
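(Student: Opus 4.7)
The plan is to reduce this to Lemma~\ref{prop_ADA_learn_XOR} by zeroing out the ReLU branch of the pyramidal neuron. Since Equation~\eqref{eq_PyNADA} defines the PyNADA output as a sum of a ReLU term and an ADA term, if I can choose $w'$ and $b'$ so that $ReLU(x \cdot w' + b') = 0$ for every row of the XOR matrix $X$ in Equation~\eqref{eq_xor_data}, then the PyNADA output reduces exactly to the ADA neuron already analyzed in Lemma~\ref{prop_ADA_learn_XOR}, and the conclusion follows.

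Concretely, I would first set $w' = [0,0]^\top$ and $b' = 0$ (or any choice that makes $x \cdot w' + b' \leq 0$ on all four XOR inputs), so that the ReLU contribution vanishes identically on the data set. I would then reuse the apical-branch configuration from the proof of Lemma~\ref{prop_ADA_learn_XOR}, namely $w'' = [5,5]^\top$, $b'' = -4$, $\alpha = 1$, and $c = 1$. With these choices, Equation~\eqref{eq_PyNADA} evaluated on $X$ reduces to $ADA(X \cdot w'' + b'', \alpha, c)$, which by Lemma~\ref{prop_ADA_learn_XOR} rounds to the target vector $T$ in Equation~\eqref{eq_xor_data}. A one-line invocation of Lemma~\ref{prop_ADA_learn_XOR} then closes the argument.

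There is essentially no obstacle here: the whole point is that PyNADA strictly generalizes the ADA neuron (the apical branch alone can be recovered by switching off the basal branch), so any function realizable by an ADA neuron — including XOR — is trivially realizable by a PyNADA. The only thing worth being careful about is verifying that the chosen $w'$, $b'$ indeed annihilate the ReLU term on all four XOR points, which is immediate for $w' = 0$, $b' = 0$ since $ReLU(0) = 0$. Hence the proof will be quite short, in line with the brevity of Corollaries~\ref{prop_ADA_learn_OR} and~\ref{prop_ADA_learn_AND}.
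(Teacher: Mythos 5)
Your proposal is correct and matches the paper's own proof essentially verbatim: the paper likewise sets $w''=\begin{bmatrix}5\\5\end{bmatrix}$, $b''=-4$, $\alpha=1$, $c=1$ for the apical tuft and drops the basal branch by choosing $w'=\begin{bmatrix}0\\0\end{bmatrix}$, $b'=0$, reducing Equation~\eqref{eq_PyNADA} to the ADA neuron of Lemma~\ref{prop_ADA_learn_XOR}. No gaps; your extra remark that any $w',b'$ making $x \cdot w' + b' \leq 0$ on the four XOR points would also work is a harmless slight generalization of the same argument.
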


\begin{proof}
We can trivially prove Corollary~\ref{prop_PyNADA_learn_XOR} by following the proof for Lemma~\ref{prop_ADA_learn_XOR}. For the apical tuft, we can simply set\vspace{0.1cm} $\mathbf{w}''=\begin{bmatrix}
5\\
5
\end{bmatrix}\vspace{0.1cm}$, $b''=-4$, $\alpha=1$ and $c=1$ in Equation~\eqref{eq_PyNADA}, just as in the proof for Lemma~\ref{prop_ADA_learn_XOR}. The demonstration results immediately when we simply drop out the basal dendrites\vspace{0.1cm} by setting $\mathbf{w}'=\begin{bmatrix}
0\\
0
\end{bmatrix}$ and $b'=0$.
\end{proof}

By proposing PyNADA, we can jointly leverage the capability of solving nonlinear problems of ADA, and the simplicity and speed of the ReLU activation function, creating more powerful models.

\subsection{Neural Architectures with ADA and PyNADA}
\label{sec_method_arch}

Both ADA and PyNADA are powerful neural building blocks that can increase a network's nonlinear discrimination capacity. Furthermore, they can be easily integrated into any network architecture, without changing the overall structure of the respective neural network. In Figure \ref{fig_mlp_relu_vs_ada}, we illustrate how ADA can replace the ReLU activation in a one-hidden-layer MLP architecture with 4 hidden neurons and 2 output neurons. In Figure \ref{fig_mlp_relu}, we depict the regular architecture based on ReLU activations, while in Figure \ref{fig_mlp_ada}, we illustrate an analogous neural network where ReLU is replaced with ADA. It is fairly easy to observe that replacing ReLU with ADA implies no other changes. 

Similarly, in Figure \ref{fig_mlp_pynrelu_vs_pynada}, we showcase how a one-hidden-layer MLP based on pyramidal neurons should look like. We present two configurations, one where ReLU activations are used for both branches of the pyramidal neurons, called PyNReLU (illustrated in Figure \ref{fig_mlp_pynrelu}), and one where we use ReLU for the basal branch and ADA for the apical branch of each neuron, called PyNADA (illustrated in Figure \ref{fig_mlp_pynada}). Employing pyramidal neurons as in Figure \ref{fig_mlp_pynrelu_vs_pynada} doubles the number of learnable parameters of the network with respect to using standard neurons, as in Figure \ref{fig_mlp_relu_vs_ada}. To this end, we take PyNReLU as the baseline for PyNADA, since both of them share the same number of weights. We once again emphasize that the architectural changes are straightforward and minimal.

Finally, we underline that the architectural changes illustrated in Figure \ref{fig_mlp_relu_vs_ada} and Figure \ref{fig_mlp_pynrelu_vs_pynada} are directly transferable to any type of neural network architecture.

\section{Experiments}
\label{sec_experiments}

To demonstrate the practical abilities of the proposed activation function and artificial neuron, we conduct experiments on six data sets: MOROCO \citep{Butnaru-ACL-2019}, UTKFace \citep{Zhang-CVPR-2017}, CREMA-D \citep{Cao-TAC-2014}, Fashion-MNIST \citep{Xiao-A-2017}, ImageNet \citep{Russakovsky-IJCV-2015} and Tiny ImageNet. In these experiments, we compare our contributions to state-of-the-art activation functions, alternatively inserting the respective activations and our activation into several neural network architectures. In Section \ref{sec_data_sets}, we provide details about the chosen data sets. In Section \ref{sec_eval_setup}, we present the generic evaluation setup used across all six data sets. We discuss the results on the six data sets in Sections \ref{sec_results_moroco}, \ref{sec_results_utk}, \ref{sec_results_crema}, \ref{sec_results_fashion},  \ref{sec_results_tiny}  and \ref{sec_results_imagenet}, respectively. We present a series of ablation results in Section \ref{sec_results_ablation}. We demonstrate the capabilities of approximating nonlinear functions with neural networks based on various activation functions in Section \ref{sec_results_nfa}.
We conclude the experiments with the discussion provided in Section \ref{sec_discussion}.

\subsection{Data Sets}
\label{sec_data_sets}

\noindent 
{\bf MOROCO.} We conduct text classification experiments on MOROCO \citep{Butnaru-ACL-2019}, a data set of 33,564 news articles that are written in Moldavian or Romanian. Each text sample has an average of 309 tokens, the total number of tokens being over 10 millions. The main task is to discriminate between the Moldavian and the Romanian dialects. The data set comes with a training set of 21,719 text samples, a validation set of 5,921 text samples and a test set of 5,924 text samples.

\noindent 
{\bf UTKFace.}
UTKFace \citep{Zhang-CVPR-2017} is a large data set of 23,708 high-resolution images. The images contain faces of people of various ages, genders and ethnic groups. We use the unaligned cropped faces in our experiments. We randomly divide the data set into a training set of 16,595 images ($70\%$), a validation set of 3,556 images ($15\%$) and a test set of 3,557 images ($15\%$). We consider two tasks on UTKFace: gender recognition (binary classification) and age estimation (regression).

\noindent 
{\bf CREMA-D.}
The CREMA-D multimodal database \citep{Cao-TAC-2014} contains 7,442 videoclips of 91 actors (48 male and 43 female) with different ethnic backgrounds. The actors were asked to convey particular emotions while producing, with different intonations, 12 particular sentences that evoke the target emotions. Six labels have been used to discriminate among different emotion categories: neutral, happy, anger, disgust, fear and sad. %The labels corresponding to each recording were collected using crowd-sourcing. 
In our experiments, we consider only the audio modality. We split the audio samples into $70\%$ for training, $15\%$ for validation and $15\%$ for testing.
% More  precisely,  2443  participants  were  asked  to  label  the perceived emotion. The human accuracy achieved for this task on audio-only data was, on average, $40.9\%$ . 
%It is worth mentioning that human training was achieved through  participants' previous experience.

\noindent 
{\bf Fashion-MNIST.}
Fashion-MNIST \citep{Xiao-A-2017} is a recently introduced data set that shares the same structure as the more popular MNIST \citep{LeCun-PI-1998} data set, i.e.~it contains 60,000 training images and 10,000 test images that belong to $10$ classes of fashion items. We use a subset of 10,000 images from the training set for validation. %The size of each image is $28 \times 28$ pixels. Fashion-MNIST was essentially proposed because the accuracy on MNIST has saturated to nearly $100\%$ on the test set, thus being hard to report any further improvements.
%\subsection{Generic Experimental Setup}

\noindent 
{\bf ImageNet.} The ImageNet Large Scale Visual Recognition Challenge (ILSVRC) \citep{Russakovsky-IJCV-2015} is based on a data set of 1000 object categories and about 1.2 million images. We use the official validation set of 50,000 images for comparing the models.

\noindent 
{\bf Tiny ImageNet.}
We present results with longer apical dendrites on Tiny ImageNet, a subset of ImageNet \citep{Russakovsky-IJCV-2015}. The data set provides 500 training images, 50 validation images and 50 test images for 200 object classes. The size of each image is $64 \times 64$ pixels.

\subsection{Evaluation Setup}
\label{sec_eval_setup}

\noindent {\bf Evaluation metrics.}
For the classification tasks (object recognition, gender prediction, voice emotion recognition and dialect identification), we report the classification accuracy. For the regression task (age estimation), we report the mean absolute error (MAE).

% \vspace{-0.15cm}
\noindent {\bf Baselines.}
We consider several neural architectures ranging from shallow MLPs to deep CNNs: one-hidden-layer MLP, two-hidden-layer MLP, LeNet, VGG-9, ResNet-18, ResNet-50 and character-level CNNs with and without Squeeze-and-Excitation (SE) blocks \citep{Hu-CVPR-2018}. The baseline architectures are based on ReLU, leaky ReLU, RBF or Swish. The latter two activation functions are added because they share one property with ADA, namely the capability to solve XOR when plugged into a single artificial neuron. The goal of our experiments is to study the effect of $(i)$ replacing ReLU and leaky ReLU with ADA and leaky ADA, respectively, and $(ii)$ replacing the standard neurons with our PyNADA. The number of weights in PyNADA is twice as high, but the size of the activation maps is the same as in standard neurons (due to summation of the two branches in PyNADA). For a fair comparison, we included three baseline pyramidal neurons: one with ReLU on both branches (PyNReLU), one with ReLU and RBF (PyNRBF) and one with ReLU and Swish (PyNSwish). All neural models are trained using the Adam optimizer \citep{Kingma-ICLR-2015}. With the exception of ResNet-18 and ResNet-50, which are implemented in PyTorch \citep{Paszke-NeurIPS-2019}, all other neural networks are implemented in TensorFlow \citep{Abadi-OSDI-2016}. Since we employ different architectures in each task, we describe them in more detail in the corresponding subsections below.

In summary, we emphasize that our experiments aim to compare models within the following three groups:
% \begin{itemize}
%     \item ReLU, RBF, Swish versus ADA;
%     \item leaky ReLU versus leaky ADA;
%     \item PyNReLU, PyNRBF, PyNSwish versus PyNADA and leaky PyNADA.
% \end{itemize}
$(i)$ ReLU, RBF, Swish versus ADA; $(ii)$ leaky ReLU versus leaky ADA; $(iii)$ PyNReLU, PyNRBF, PyNSwish versus PyNADA and leaky PyNADA. Note that, for PyNADA, we consider both standard and leaky activations. In leaky PyNADA, the basal dendrites are followed by leaky ReLU and the apical dendrites are followed by leaky ADA. 

Significance testing \cite{Dietterich-NC-1998} is performed within each group, considering the above organization. We also emphasize that the compared models within a group have the same number of parameters and the same computational complexity, e.g.~we do not aim to compare ReLU with PyNADA directly.

We underline that we replace all activations or neurons inside each neural network. For example, when we compare ReLU, RBF, Swish and ADA using a certain neural network, all the activations of the respective network are of the same kind, as illustrated in Figure \ref{fig_mlp_relu_vs_ada}. The same applies to leaky ReLU and leaky ADA. The only case with two types of activation functions in the same network is when we use pyramidal neurons. Each pyramidal neuron has two branches (basal and apical), one with ReLU and the other with one of the following activations: ReLU, RBF, Swish and ADA. When experimenting with pyramidal neurons, we replace all neurons in a neural network with pyramidal neurons, as illustrated in Figure \ref{fig_mlp_pynrelu_vs_pynada}. Moreover, when changing the activation functions or the type of neurons (from conventional to pyramidal), we do not modify the depth of the underlying neural networks.

% \vspace{-0.15cm}
\noindent {\bf Generic hyperparameter tuning.}
We tune the basic hyperparameters such as the learning rate, the mini-batch size and the number of epochs, for each neural architecture using grid search on the validation set of the corresponding task. We consider learning rates between $10^{-3}$ and $10^{-5}$, mini-batches of $10$, $32$, $64$, $128$ or $800$ samples, and numbers of epochs between $10$ and $300$. Other parameters of the Adam optimizer are used with default values. We use Xavier initialization across all models, regardless of the activation function type or the depth of the network. The parameter tuning is performed using the baseline architectures based on ReLU. Once tuned, the same parameters are used for leaky ReLU, RBF, Swish, ADA, leaky ADA, PyNReLU, PyNRBF, PyNSwish, PyNADA and leaky PyNADA. Hence, we emphasize that the basic parameters are tuned in favor of ReLU. Following Maas et al.~\cite{Maas-WDLASL-2013}, the leak parameter for leaky ReLU and leaky ADA is set to $l=0.01$ in all the experiments, without further tuning.
We note that (leaky) ADA and (leaky) PyNADA have two additional hyperparameters that require tuning on validation. For the constant $c$, we consider two possible values, either $0$ or $1$. For the parameter $\alpha$, we consider two options: $(a)$ perform grid search in the range $[0.1, 1]$ using a step of $0.1$, or $(b)$ learn $\alpha$ using gradient descent during training. As recommended in \citep{Ramachandran-ICLRW-2018}, we use a learnable $\beta$ in Swish. We highlight once again that the same activation function is used in every layer of the entire network, regardless of the depth of the model. Thus, we establish the activation function hyperparameters at the architecture level.

To avoid accuracy variations due to weight initialization or stochastic training, we train each neural model in five consecutive trials, keeping the model with the highest validation performance. In the experiments, we report the performance level of the selected neural models on the held-out test set.

\subsection{Results on MOROCO}
\label{sec_results_moroco}

\begin{table}[t] 
\setlength\tabcolsep{2.5pt}
\caption{Dialect identification accuracy rates (in \%) for two character-level neural models (CNN and CNN+SE) on MOROCO. Results are reported with various activations (ReLU, leaky ReLU, RBF, Swish, ADA, leaky ADA) and artificial neurons (standard, PyNReLU, PyNRBF, PyNSwish and PyNADA). Results that are significantly better than the corresponding baselines, according to a paired McNemar's test \citep{Dietterich-NC-1998}, are marked with $\dagger$ or $\ddagger$ for the significance levels $0.05$ or $0.01$, respectively. Training times are measured on a computer with Nvidia GeForce GTX 1080 GPU with 11GB of RAM. Best model within each group is highlighted in bold.}\label{tab_MOROCO}
\small{
\begin{center}
\begin{tabular}{llccc}
\toprule
\multirow{2}{*}{\bf Model}                           & \multirow{2}{*}{\bf Activation}            & \multirow{2}{*}{\bf Parameter}             & {\bf Test }         & {\bf Time }  \\ 
                          &          &             &    {\bf Accuracy}     & {\bf (s)}\\
\midrule
CNN~\cite{Butnaru-ACL-2019}     & ReLU		            & -	                    & 92.79                 & 27\\
CNN                             & RBF	                & -                     & 54.10                 & 34\\
CNN                             & Swish	                & $\beta=$learnable     & 86.96                 & 33\\
CNN                             & ADA 		            & $\alpha=$learnable    & {\bf 93.68}$^\dagger$       & 33\\
\midrule
CNN                             & leaky ReLU	        & -             	    & 92.90                 & 30\\
CNN                             & leaky ADA 	        & $\alpha=0.4$	        & {\bf 93.55}$^\dagger$       & 37\\
\midrule
CNN+PyNReLU                     & ReLU, ReLU 	        & -	                    & 92.79                 & 52\\
CNN+PyNRBF                      & ReLU, RBF 	        & -	                    & 86.44                 & 58\\
CNN+PyNSwish                    & ReLU, Swish 	        & $\beta=$learnable	    & 91.20                 & 55\\
CNN+PyNADA                      & ReLU, ADA 	        & $\alpha=$learnable	& {\bf 93.61}$^\dagger$       & 52\\
CNN+PyNADA                      & leaky ReLU, leaky ADA & $\alpha=$learnable    & 93.53$^\dagger$       & 57\\
\toprule
CNN+SE~\cite{Butnaru-ACL-2019}  & ReLU		            & -	            	    & 92.99                 & 42\\
CNN+SE                          & RBF	                & -                     & 54.10                 & 48\\
CNN+SE                          & Swish	                & $\beta=$learnable     & 87.99                 & 47\\
CNN+SE                          & ADA 		            & $\alpha=$learnable    & {\bf 93.99}$^\ddagger$      & 47\\
\midrule
CNN+SE                          & leaky ReLU	        & -             	    & 93.06                 & 45\\
CNN+SE                          & leaky ADA 	        & $\alpha=$learnable	& {\bf 93.49}$^\dagger$       & 51\\
\midrule
CNN+SE+PyNReLU                  & ReLU, ReLU 	        & -	                    & 92.97                 & 52\\
CNN+SE+PyNRBF                   & ReLU, RBF 	        & -	                    & 86.23                 & 65\\
CNN+SE+PyNSwish                 & ReLU, Swish 	        & $\beta=$learnable	    & 91.39                 & 61\\
CNN+SE+PyNADA                   & ReLU, ADA 	        & $\alpha=0.5$	        & {\bf 93.72}$^\dagger$       & 63\\
CNN+SE+PyNADA                   & leaky ReLU, leaky ADA & $\alpha=$learnable    & 93.61$^\dagger$       & 60\\
\bottomrule
\end{tabular}
\end{center}
}
%\vskip -0.1in
\end{table}

\begin{figure}[!t]
%\vskip 0.2in
\begin{center}
\centerline{\includegraphics[width=0.85\columnwidth]{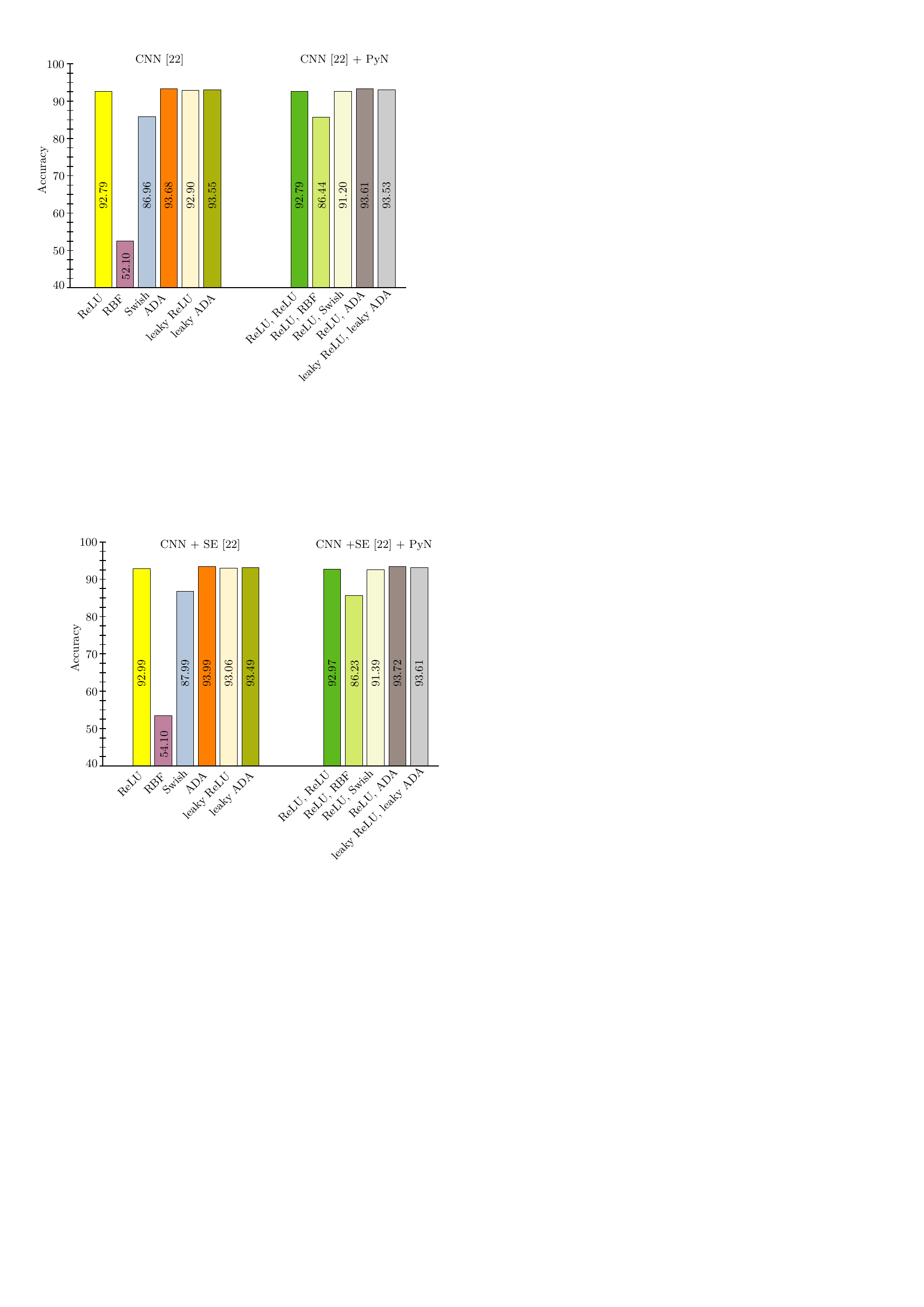}}
%\vskip -0.05in
\caption{Bar chart of the results obtained on the MOROCO data set by different versions of the character-level CNN \cite{Butnaru-ACL-2019} architecture with both standard and pyramidal neurons activated by various functions: ReLU, RBF, Swish, ADA, leaky ReLU and leaky ADA. Best viewed in color.}
\label{fig_tab_moroco_cnn}
\end{center}
%\vskip -0.35in
\end{figure}

\begin{figure}[!t]
%\vskip 0.2in
\begin{center}
\centerline{\includegraphics[width=0.85\columnwidth]{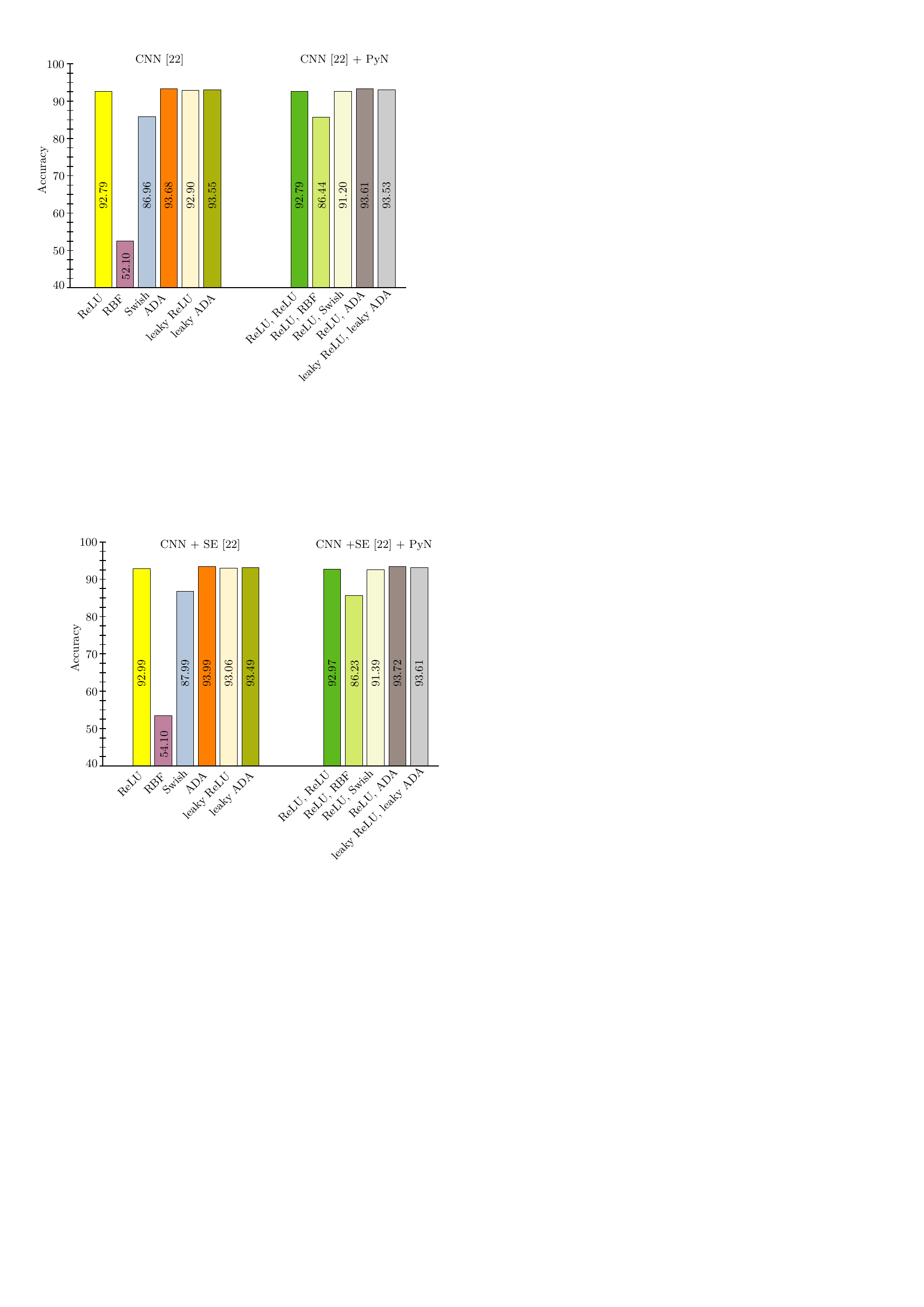}}
%\vskip -0.05in
\caption{Bar chart of the results obtained on the MOROCO data set by different versions of the character-level CNN+SE \cite{Butnaru-ACL-2019} architecture with both standard and pyramidal neurons activated by various functions: ReLU, RBF, Swish, ADA, leaky ReLU and leaky ADA. Best viewed in color.}
\label{fig_tab_moroco_cnnSE}
\end{center}
%\vskip -0.35in
\end{figure}

% \vspace{-0.15cm}
\noindent {\bf Neural architectures.}
For dialect identification, we consider the character-level CNN models presented in \citep{Butnaru-ACL-2019}, which follow closely the model of Zhang et al.~\cite{Zhang-NIPS-2015}. The two CNNs share the same architecture, being composed of an embedding layer, followed by three convolutional and max-pooling blocks, two fully-connected layers with dropout $0.5$, and the final softmax classification layer. The second architecture incorporates an attention mechanism in the form of Squeeze-and-Excitation (SE) blocks \citep{Hu-CVPR-2018} inserted after every convolutional layer. For the SE blocks, we set the reduction ratio to $64$. For both architectures, we keep the same size for the embedding ($256$) and the same number of convolutional filters ($128$) as Butnaru et al.~\cite{Butnaru-ACL-2019}. In fact, our baseline architectures (CNN and CNN+SE) are identical to those of Butnaru et al.~\cite{Butnaru-ACL-2019}.

% \vspace{-0.15cm}
\noindent {\bf Specific hyperparameter tuning.}
Since Butnaru et al.~\cite{Butnaru-ACL-2019} already tuned the hyperparameters of the character-level CNNs on the MOROCO validation set, we decided to use the same parameters and skip the grid search. Hence, we set the learning rate to $5 \cdot 10^{-4}$ and use mini-batches of $128$ samples. Each CNN is trained for $50$ epochs in 5 trials, keeping the model with the highest validation accuracy for evaluation on the test set. For (leaky) ADA and (leaky) PyNADA, we obtain optimal results with $c=1$, while $\alpha$ is either validated or optimized during training.

% \vspace{-0.15cm}
\noindent {\bf Results.}
We present the dialect identification results on MOROCO in Table~\ref{tab_MOROCO}. In addition to Table \ref{tab_MOROCO}, we illustrate the results for both standard and pyramidal neurons in Figure \ref{fig_tab_moroco_cnn} (for the CNN \cite{Butnaru-ACL-2019} architecture) and Figure \ref{fig_tab_moroco_cnnSE} (for the CNN+SE {\cite{Butnaru-ACL-2019}} architecture). First, we observe the baseline CNN and CNN+SE models confirm the results reported by Butnaru et al.~\cite{Butnaru-ACL-2019}. %We also note that the reported accuracy rates are consistent across validation and test, for all models. 
For the character-level CNN (without SE blocks), we obtain the largest improvement on the test set when we replace ReLU ($92.79\%$) with ADA ($93.68\%$). Furthermore, the improvements of leaky ADA, PyNADA and leaky PyNADA are all higher than $0.6\%$, and the differences are statistically significant. The results are somewhat consistent among the two architectures, CNN and CNN+SE. For example, for the CNN+SE model, we report the largest improvement by replacing ReLU ($92.99\%$) with ADA ($93.99\%$), just as for the CNN without SE blocks. Our highest absolute gain on MOROCO is $1\%$. Overall, the results indicate that all variants of (leaky) ADA and (leaky) PyNADA obtain significantly better results than the corresponding baselines. We observe that neither character-level CNN model is able to converge using RBF as activation. For RBF, we reported the test accuracy corresponding to the last model before the gradients explode. Interestingly, PyNRBF is able to converge, probably due to the basal tuft based on ReLU, but its performance level is not adequate. The networks converge with Swish and PyNSwish, but the corresponding results are much lower than those with (leaky) ReLU or (leaky) ADA. 

\noindent {\bf Running times.}
Since ADA needs to compute the exponential function, it is more computational intensive than ReLU. Moreover, PyNADA has twice more weights than a standard neuron. Hence, in addition to the accuracy rates, we hereby report the training time (in seconds per epoch) in Table~\ref{tab_MOROCO}. With respect to (leaky) ReLU, it seems that (leaky) ADA requires $5$ to $7$ additional seconds per epoch, which means that the training times increases by $15\%$ or $20\%$. The other activation functions that include the exponential function, e.g.~RBF and Swish, have the same disadvantage as ADA.

Meanwhile, (leaky) PyNADA seems to need about $25$ to $30$ extra seconds compared to (leaky) ReLU, increasing the training time by $60\%$ to $100\%$. We thus conclude that the accuracy improvements brought by (leaky) ADA and (leaky) PyNADA come with a non-negligible computational cost with respect to ReLU or leaky ReLU. At the same time, RBF and Swish are slower than ReLU, while also yielding inferior accuracy levels.

As the time measurements are consistent across all benchmarks, we refrain from reporting and commenting on the running times in the subsequent experiments.

%\vspace{-0.95cm}
\subsection{Results on UTKFace}
\label{sec_results_utk}
%\vspace{-0.15cm}

\begin{table}[!t] 
\setlength\tabcolsep{2.6pt}
\caption{Gender prediction accuracy rates (in \%) and age estimation MAEs for ResNet-50 on UTKFace. Results are reported with various activations (ReLU, leaky ReLU, RBF, Swish, ADA, leaky ADA) and artificial neurons (standard, PyNReLU, PyNRBF, PyNSwish and PyNADA). Results that are significantly better than the corresponding baselines, according to a paired McNemar's test \citep{Dietterich-NC-1998}, are marked with $\ddagger$ for the significance level $0.01$. Best model within each group is highlighted in bold.}
\label{tab_UTKFace}
\small{
\begin{center}
\begin{tabular}{cllcc}
\toprule
\multirow{2}{*}{\bf Task}          & \multirow{2}{*}{\bf Model}               & \multirow{2}{*}{\bf Activation}  & \multirow{2}{*}{\bf Parameter}       & {\bf Test}\\
          &                &   &        & {\bf Accuracy}\\
\midrule
\multirow{12}{*}{\rotatebox{90}{Gender prediction}} & ResNet-50                    & ReLU		    & -	                    & {\bf 88.56}\\
                    & ResNet-50                    & RBF	        & -                     & 50.12\\
                    & ResNet-50                    & Swish          & $\beta=$learnable     & 87.97\\
                    & ResNet-50                    & ADA 		    & $\alpha=0.1$          & 88.49\\
\cmidrule{2-5}
                    & ResNet-50                    & leaky ReLU     & -                     & 88.91\\
                    & ResNet-50                    & leaky ADA 	    & $\alpha=0.1$	        & {\bf 89.12}\\
\cmidrule{2-5}
                    & ResNet-50+PyNReLU            & ReLU, ReLU 	& -	                    & 88.50\\
                    & ResNet-50+PyNRBF             & ReLU, RBF 	    & -	                    & 90.77\\
                    & ResNet-50+PyNSwish           & ReLU, Swish 	& $\beta=$learnable     & 50.44\\
                    & ResNet-50+PyNADA              & ReLU, ADA             & $\alpha=0.5$  & {\bf 91.65}$^\ddagger$\\
                    & ResNet-50+PyNADA              & leaky ReLU, leaky ADA & $\alpha=0.5$  & 91.53$^\ddagger$\\
\toprule
\multirow{2}{*}{\bf Task}          & \multirow{2}{*}{\bf Model}               & \multirow{2}{*}{\bf Activation}  & \multirow{2}{*}{\bf Parameter}       & {\bf Test}\\
          &                &   &        & {\bf MAE}\\
\midrule
\multirow{12}{*}{\rotatebox{90}{Age estimation}} & ResNet-50     & ReLU           & -	                    & 6.39\\
                    & ResNet-50                    & RBF 		    & -                     & 14.76\\
                    & ResNet-50                    & Swish 		    & $\beta=$learnable     & 6.59\\
                    & ResNet-50                    & ADA 		    & $\alpha=$learnable    & {\bf 5.91}$^\ddagger$\\
\cmidrule{2-5}                    
                    & ResNet-50                    & leaky ReLU	    & -                     & 6.01\\
                    & ResNet-50                    & leaky ADA 	    & $\alpha=$learnable    & {\bf 5.88}$^\ddagger$\\
\cmidrule{2-5} 
                    & ResNet-50+PyNReLU             & ReLU, ReLU 	& -	                    & 7.35\\
                    & ResNet-50+PyNRBF              & ReLU, RBF     & -                     & 6.61\\
                    & ResNet-50+PyNSwish            & ReLU, Swish   & $\beta=$learnable     & 15.12\\
                    & ResNet-50+PyNADA              & ReLU, ADA             & $\alpha=0.5$  & {\bf 5.74}$^\ddagger$\\
                    & ResNet-50+PyNADA              & leaky ReLU, leaky ADA & $\alpha=0.5$  & 5.77$^\ddagger$\\

\bottomrule
\end{tabular}
\end{center}
}
%\vskip -0.1in
\end{table}

\begin{figure}[!t]
%\vskip 0.2in
\begin{center}
\centerline{\includegraphics[width=0.85\columnwidth]{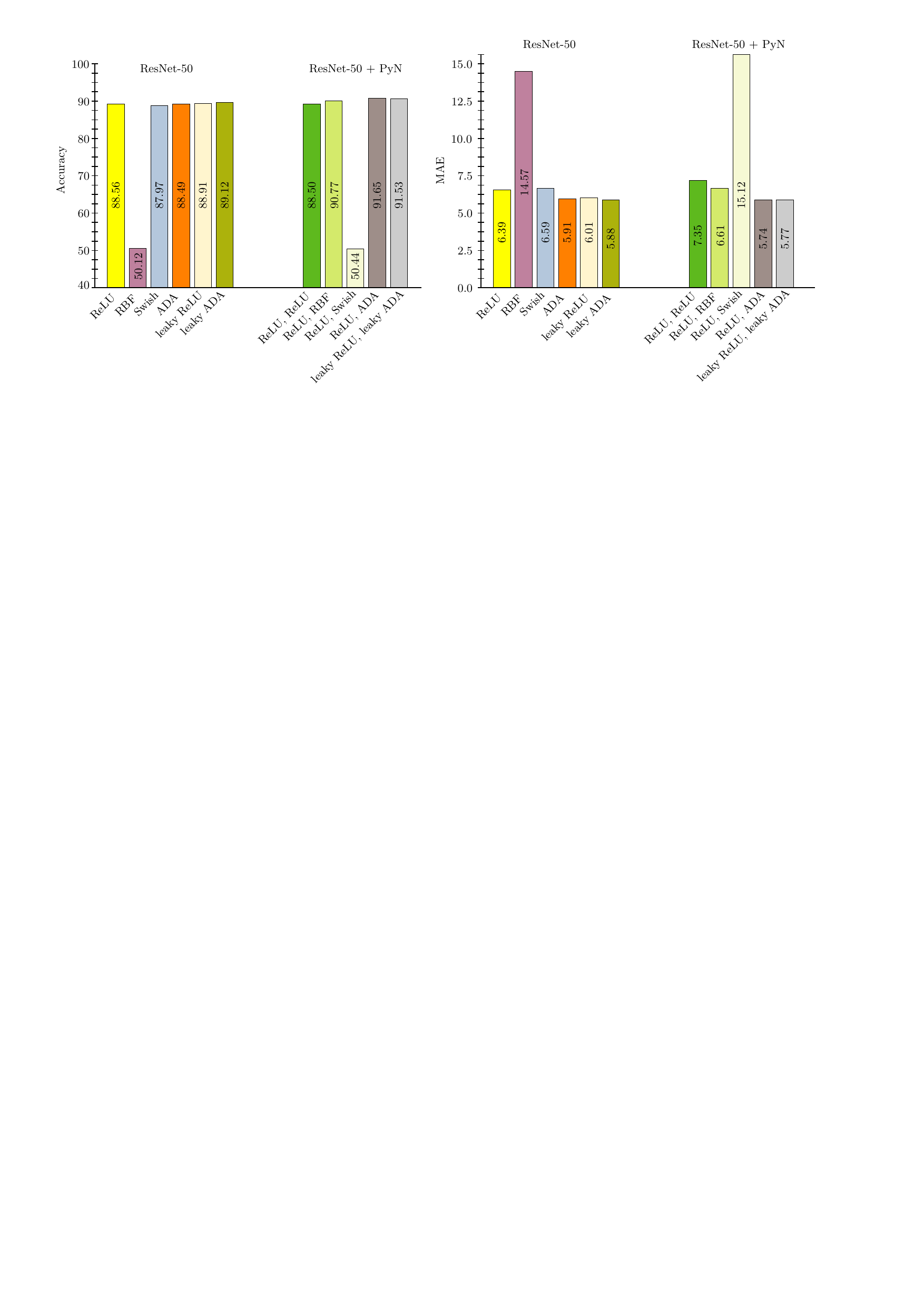}}
%\vskip -0.05in
\caption{Bar chart of the gender prediction results on the UTKFace data set obtained by different versions of the ResNet-50 architecture with both standard and pyramidal neurons activated by various functions: ReLU, RBF, Swish, ADA, leaky ReLU and leaky ADA. Best viewed in color.}
\label{fig_tab_utk_acc}
\end{center}
%\vskip -0.35in
\end{figure}

\begin{figure}[!t]
%\vskip 0.2in
\begin{center}
\centerline{\includegraphics[width=0.85\columnwidth]{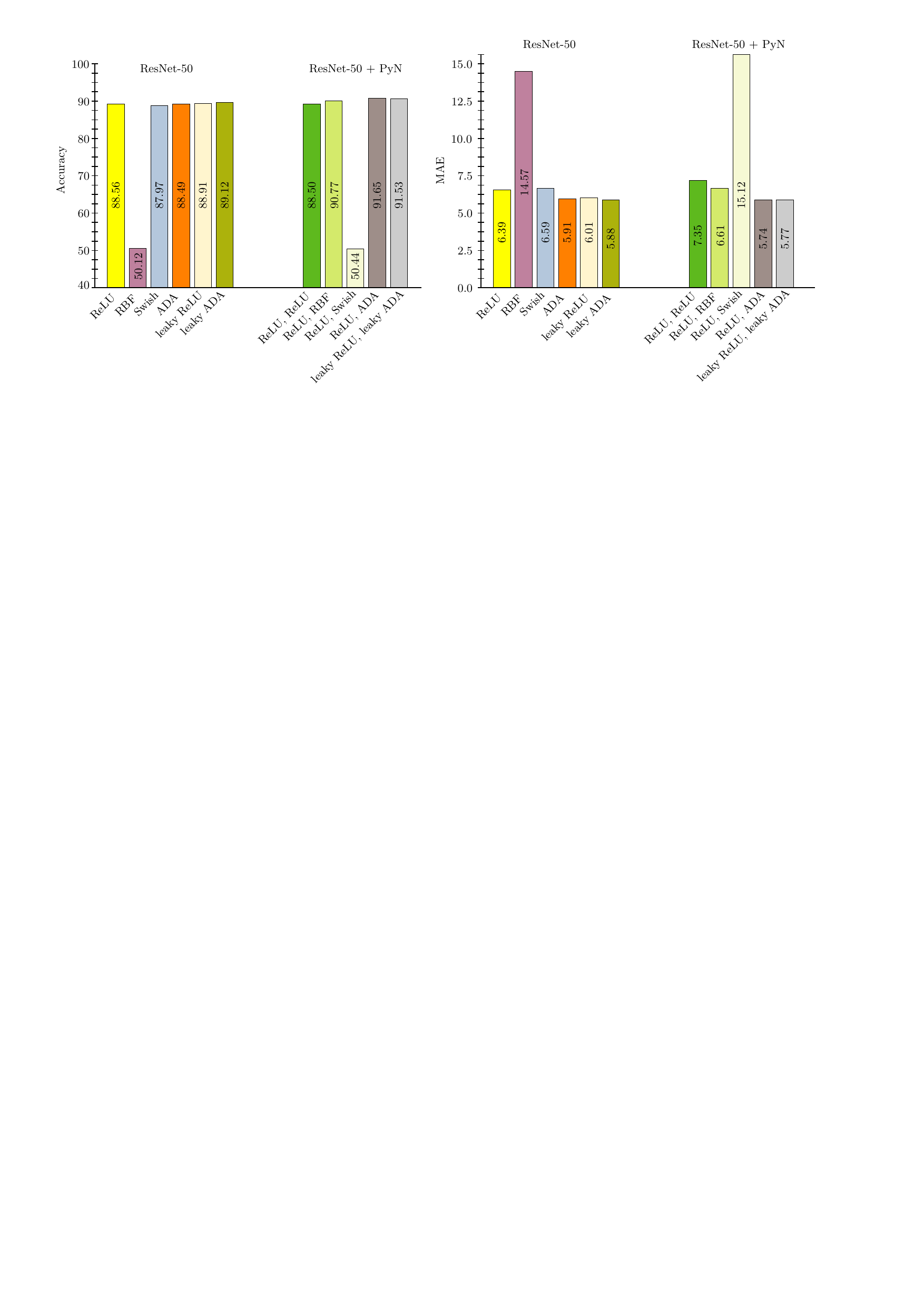}}
%\vskip -0.05in
\caption{Bar chart of age estimation results on the UTKFace data set obtained by different versions of the ResNet-50 architecture with both standard and pyramidal neurons activated by various functions: ReLU, RBF, Swish, ADA, leaky ReLU and leaky ADA. Best viewed in color.}
\label{fig_tab_utk_mae}
\end{center}
%\vskip -0.35in
\end{figure}

% \vspace{-0.15cm}
\noindent {\bf Neural architecture.}
For gender prediction and age estimation, we employ the ResNet-50 architecture \citep{He-CVPR-2016}. Residual networks use batch normalization and skip connections to propagate information over convolutional layers, avoiding the vanishing or exploding gradient problem. This enables effective training of very deep models such as ResNet-50, which is formed of 50 layers. 

% \vspace{-0.15cm}
\noindent {\bf Specific hyperparameter tuning.} All ResNet-50 variants are trained on mini-batches of $10$ samples using a learning rate of $10^{-4}$. The models are trained for $15$ epochs on the gender prediction task, and for $100$ epochs on the age estimation task. For (leaky) ADA and (leaky) PyNADA, we either validate or learn the parameter $\alpha$, at the same time setting the parameter $c$ to $0$.

% \vspace{-0.15cm}
\noindent {\bf Results.}
We present the gender prediction and age estimation results in Table~\ref{tab_UTKFace}. In addition, we illustrate the results on gender prediction for both standard and pyramidal neurons in Figure \ref{fig_tab_utk_acc}. In the gender prediction task, we notice that ADA yields slightly lower results than ReLU, while leaky ADA attains slightly better results than leaky ReLU. Both RBF and Swish provide lower results than ReLU and ADA. Nevertheless, we observe significant improvements with PyNADA over PyNReLU. Our highest absolute gain ($3.15\%$) in the gender prediction task is obtained when ResNet-50 is equipped with PyNADA ($91.65\%$) instead of the baseline PyNReLU ($88.50\%$).

Aside from Table \ref{tab_UTKFace}, we also present the results on age estimation for both standard and pyramidal neurons in Figure \ref{fig_tab_utk_mae}. In the age estimation task, we notice that all versions of (leaky) ADA and (leaky) PyNADA surpass the corresponding baselines by significant margins. With an MAE of $5.74$ on the test set, our PyNADA attains the best results in age estimation. With respect to the PyNReLU baseline, PyNADA reduces the average error by $1.61$ years and the difference is statistically significant.

%\vspace{-0.95cm}
\subsection{Results on CREMA-D}
%\vspace{-0.15cm}
\label{sec_results_crema}

\begin{table}[!t] 
\caption{Voice emotion recognition accuracy rates (in \%) for ResNet-18 on CREMA-D. Results are reported with various activations (ReLU, leaky ReLU, RBF, Swish, ADA, leaky ADA) and artificial neurons (standard, PyNReLU, PyNRBF, PyNSwish, PyNADA). Results that are significantly better than the corresponding baselines, according to a paired McNemar's test \citep{Dietterich-NC-1998}, are marked with $\ddagger$ for the significance level $0.01$. As reference, the state-of-the-art methods are also included. Best model within each group is highlighted in bold.}
\label{tab_cremaD}
\small{
\begin{center}
\begin{tabular}{llcc}
\toprule
\multirow{2}{*}{\bf Model}              & \multirow{2}{*}{\bf Activation}    & \multirow{2}{*}{\bf Parameter}           & {\bf Test}\\
              &     &          & {\bf Accuracy}\\
\midrule
Shukla et al.~\cite{Shukla-ICASSP-2020}             & ReLU	            & -	            	        & 55.01\\
He et al.~\cite{He-CVPRW-2020}             & leaky ReLU		            & -	            	        & 58.71\\
\midrule
ResNet-18                             & ReLU		        & -	            	        & 62.28\\
ResNet-18                             & RBF	                & -                         & 57.66\\
ResNet-18                             & Swish	            & $\beta=$learnable         & 16.66\\
ResNet-18                             & ADA 		        & $\alpha=0.1$              & {\bf 64.53}$^\ddagger$\\
\midrule
ResNet-18                             & leaky ReLU	        & -             	        & 63.13\\
ResNet-18                             & leaky ADA 	        & $\alpha=0.1$	            & {\bf 65.37}$^\ddagger$\\
\midrule
ResNet-18+PyNReLU                     & ReLU, ReLU 	        & -	                        & 63.39\\
ResNet-18+PyNRBF                      & ReLU, RBF 	        & -	                        & 58.72\\
ResNet-18+PyNSwish                    & ReLU, Swish 	    & $\beta=$learnable         & 16.66\\
ResNet-18+PyNADA                      & ReLU, ADA 	        & $\alpha=0.1$	            & {\bf 65.15}$^\ddagger$\\
ResNet-18+PyNADA                      & leaky ReLU, leaky ADA & $\alpha=0.1$            & 65.10$^\ddagger$\\
\bottomrule
\end{tabular}
\end{center}
}
%\vskip -0.1in
\end{table}

\begin{figure}[!t]
%\vskip 0.2in
\begin{center}
\centerline{\includegraphics[width=1.0\columnwidth]{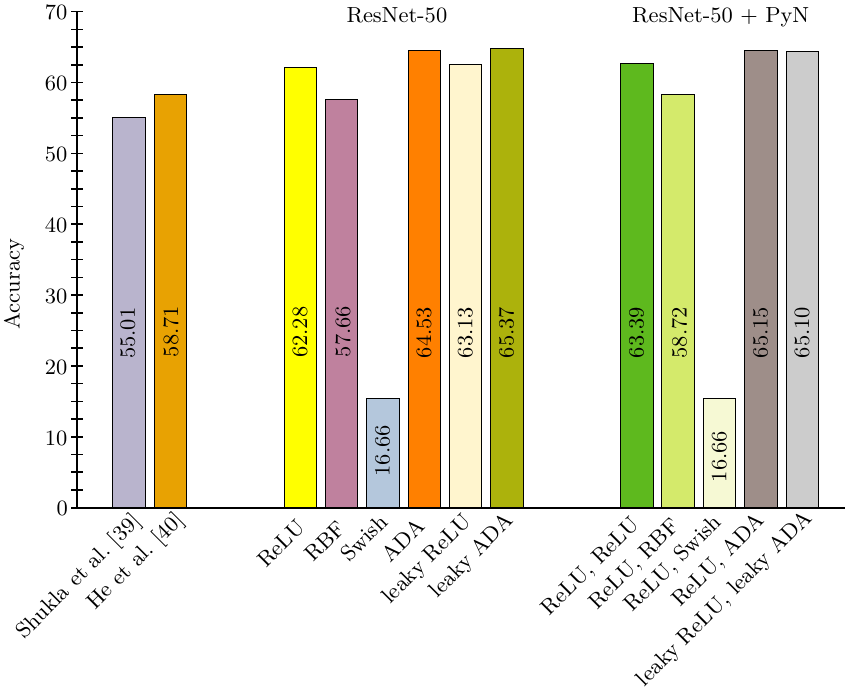}}
%\vskip -0.05in
\caption{Bar chart of the emotion classification results on the CREMA-D data set obtained by two state-of-the-art methods \citep{Shukla-ICASSP-2020,He-CVPRW-2020}, as well as different versions of the ResNet-18 architecture with both standard and pyramidal neurons activated by various functions: ReLU, RBF, Swish, ADA, leaky ReLU and leaky ADA. Best viewed in color.}
\label{fig_tab_cremad}
\end{center}
%\vskip -0.35in
\end{figure}

% \vspace{-0.15cm}
\noindent {\bf Neural architecture.}
For speech emotion recognition, we employ the ResNet-18 architecture \citep{He-CVPR-2016}. %Residual networks use skip connections to propagate information over convolutional layers, avoiding the vanishing gradient problem. 
We modify the number of input channels of ResNet-18 from 3 to 2, in order to feed the network with the Short Time Fourier Transform of the raw audio signals, where the real and the imaginary parts are considered as separate input channels.

% \vspace{-0.15cm}
\noindent {\bf Specific hyperparameter tuning.} 
All ResNet-18 variants are trained on mini-batches of $16$ samples using a learning rate of $5 \cdot 10^{-4}$. The models are trained for $70$ epochs. For (leaky) ADA and (leaky) PyNADA, we either validate or learn the parameter $\alpha$, at the same time setting the parameter $c$ to $0$. Moreover, we apply training data augmentation using time shifting, thus obtaining more training samples which improve the robustness to data variation of ResNet-18.
 
% \vspace{-0.15cm}
\noindent {\bf Results.}
We present the speech emotion recognition results in Table~\ref{tab_cremaD}. In addition, we illustrate the results for the emotion recognition task in Figure \ref{fig_tab_cremad}. We observe that ADA and leaky ADA attain superior results compared with the other activation functions, when conventional neurons are employed in ResNet-18. The RBF activation function offers significantly lower results with respect to the most commonly-used activation function, ReLU, while the Swish function fails to converge for both standard and pyramidal neurons. We generally observe that the ResNet-18 based on pyramidal neurons outperforms the ResNet-18 based on conventional neurons, regardless of the activation function. This highlights the effectiveness of the pyramidal design. Our highest absolute gain ($2.24\%$) is obtained for leaky ADA in comparison with leaky ReLU. Moreover, the performance gains brought by (leaky) ADA and (leaky) PyNADA are statistically significant with respect to the corresponding baselines. We notice that our accuracy rates for the audio modality on CREMA-D surpass the  state-of-the-art accuracy levels reported in \citep{Shukla-ICASSP-2020,He-CVPRW-2020}. Given that we report significant improvements over baselines that are already superior to the state of the art, we consider that our results on CREMA-D are remarkable.

\begin{table}[!t]
\setlength\tabcolsep{1.6pt}
\caption{\small{Object class recognition accuracy rates (in \%) for two MLPs (one-hidden-layer MLP and two-hidden-layer MLP) on Fashion-MNIST. Results are reported with various activations (ReLU, leaky ReLU, RBF, Swish, ADA, leaky ADA) and artificial neurons (standard, PyNReLU, PyNRBF, PyNSwish and PyNADA). Results that are significantly better than the corresponding baselines, according to a paired McNemar's test \citep{Dietterich-NC-1998}, are marked with $\dagger$ or $\ddagger$ for the significance levels $0.05$ or $0.01$, respectively. As reference, the results with one-hidden-layer and two-hidden-layer MLPs reported by Xiao et al.~\cite{Xiao-A-2017} are also included. 
Best model within each group is highlighted in bold.}}
\label{tab_Fashion_MNIST_a}
\scriptsize{
\begin{center}
\begin{tabular}{llcc}
\toprule
\multirow{2}{*}{\bf Model}                           & \multirow{2}{*}{\bf Activation}        & \multirow{2}{*}{\bf Parameter}             & {\bf Test}\\
&        &             & {\bf Accuracy}\\
\midrule
one-hidden-layer MLP~\citep{Xiao-A-2017} & ReLU		& -	                    & 87.10\\
one-hidden-layer MLP~\citep{Xiao-A-2017} & tanh		& -	                    & 86.80\\
\midrule
one-hidden-layer MLP              & ReLU		    & -	                    & 88.88\\
one-hidden-layer MLP              & RBF	            & -                     & 88.38\\
one-hidden-layer MLP              & Swish	        & $\beta=$learnable     & 88.85\\
one-hidden-layer MLP              & ADA 		    & $\alpha=0.3$          & {\bf 88.98}\\ % 0.65
\midrule
one-hidden-layer MLP              & leaky ReLU	    & -                     & 88.40\\
one-hidden-layer MLP              & leaky ADA 	    & $\alpha=0.3$	        & {\bf 88.97}$^\ddagger$\\ 
\midrule
one-hidden-layer MLP+PyNReLU      & ReLU, ReLU 	    & -                     & 89.00\\
one-hidden-layer MLP+PyNRBF       & ReLU, RBF 	    & -	                    & 88.89\\
one-hidden-layer MLP+PyNSwish     & ReLU, Swish     & $\beta=$learnable     & 89.03\\
one-hidden-layer MLP+PyNADA       & ReLU, ADA 	    & $\alpha=$learnable    & {\bf 89.45}$^\ddagger$\\
one-hidden-layer MLP+PyNADA       & leaky ReLU, leaky ADA   & $\alpha=$learnable & 89.34$^\dagger$\\
\toprule
two-hidden-layer MLP~\citep{Xiao-A-2017} & ReLU		& -	                    & 87.00\\
two-hidden-layer MLP~\citep{Xiao-A-2017} & tanh		& -	                    & 86.30\\
\midrule
two-hidden-layer MLP              & ReLU		    & -	                    & 88.71\\
two-hidden-layer MLP              & RBF       	    & -             	    & 87.84\\
two-hidden-layer MLP              & Swish       	& $\beta=$learnable     & 88.92\\
two-hidden-layer MLP              & ADA 		    & $\alpha=$learnable    & {\bf 88.99}\\
\midrule
two-hidden-layer MLP              & leaky ReLU	    & -                	    & 88.18\\
two-hidden-layer MLP              & leaky ADA 	    & $\alpha=0.1$	        & {\bf 88.93}$^\ddagger$\\
\midrule
two-hidden-layer MLP+PyNReLU      & ReLU, ReLU 	    & -     	    	    & 88.98\\
two-hidden-layer MLP+PyNRBF       & ReLU, RBF 	    & -	                    & 88.43\\
two-hidden-layer MLP+PyNSwish     & ReLU, Swish 	& $\beta=$learnable     & 88.88\\
two-hidden-layer MLP+PyNADA       & ReLU, ADA 	    & $\alpha=$learnable	& 89.40$^\ddagger$\\
two-hidden-layer MLP+PyNADA       & leaky ReLU, leaky ADA   & $\alpha=$learnable & {\bf 89.42}$^\ddagger$\\
\bottomrule
\end{tabular}
\end{center}
}
%\vskip -0.12in
\end{table}

\begin{figure}[!t]
%\vskip 0.2in
\begin{center}
\centerline{\includegraphics[width=1.0\columnwidth]{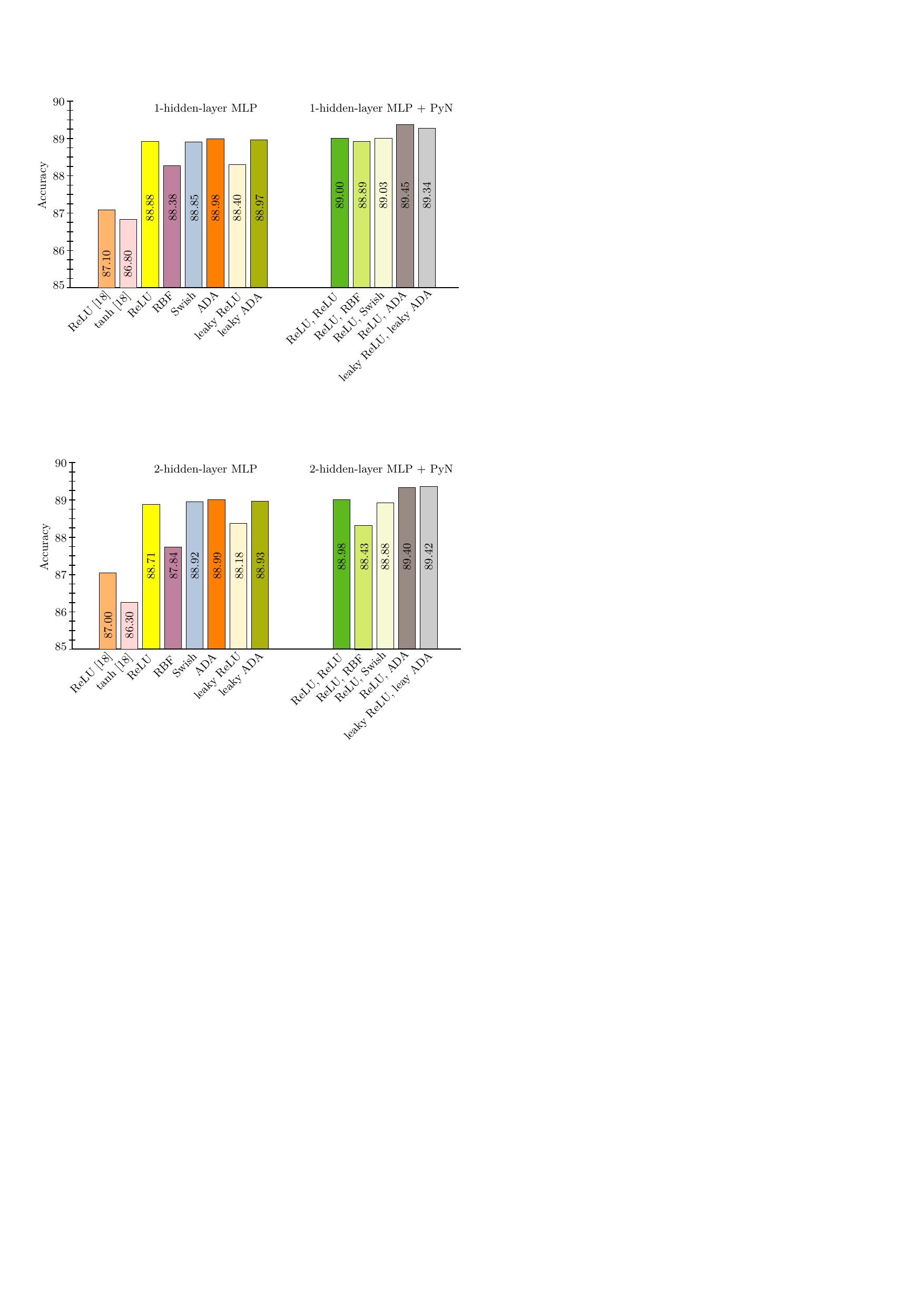}}
%\vskip -0.05in
\caption{Bar chart of the classification results on the Fashion-MNIST data set obtained by different versions of the one-hidden-layer MLP \cite{Xiao-A-2017} architecture with both standard and pyramidal neurons activated by various functions: ReLU, tanh, RBF, Swish, ADA, leaky ReLU and leaky ADA. Best viewed in color.}
\label{fig_tab_fashion_1h}
\end{center}
%\vskip -0.35in
\end{figure}

\begin{figure}[!t]
%\vskip 0.2in
\begin{center}
\centerline{\includegraphics[width=1.0\columnwidth]{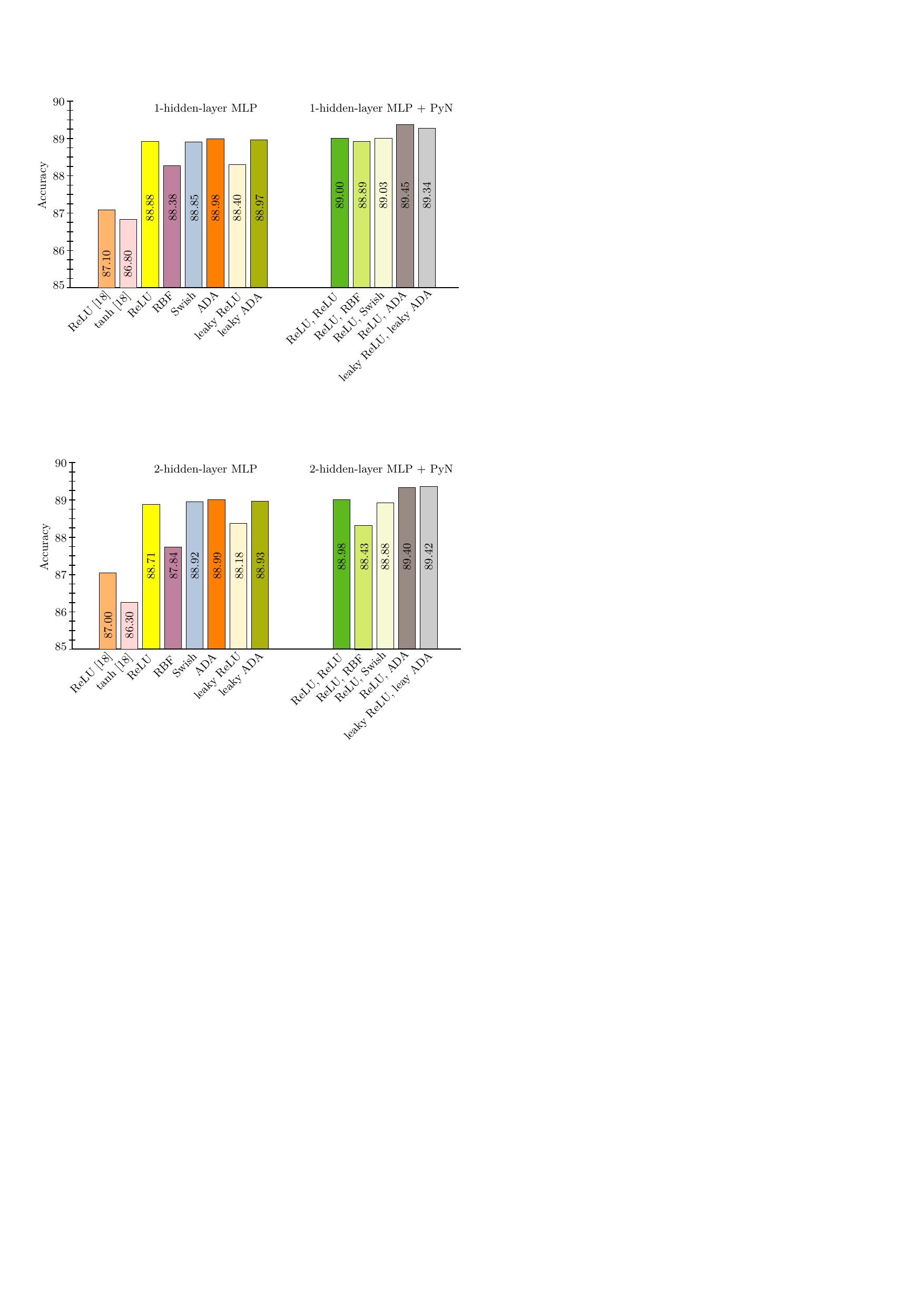}}
%\vskip -0.05in
\caption{Bar chart of the classification results on the Fashion-MNIST data set obtained by different versions of the two-hidden-layer MLP \cite{Xiao-A-2017} architecture with both standard and pyramidal neurons activated by various functions: ReLU, tanh, RBF, Swish, ADA, leaky ReLU and leaky ADA. Best viewed in color.}
\label{fig_tab_fashion_2h}
\end{center}
%\vskip -0.35in
\end{figure}

\begin{table}[!t]
\caption{\small{Object class recognition accuracy rates (in \%) for two neural models (LeNet and VGG-9) on Fashion-MNIST. Results are reported with various activations (ReLU, leaky ReLU, RBF, Swish, ADA, leaky ADA) and artificial neurons (standard, PyNReLU, PyNRBF, PyNSwish and PyNADA). Results that are significantly better than the corresponding baselines, according to a paired McNemar's test \citep{Dietterich-NC-1998}, are marked with $\dagger$ or $\ddagger$ for the significance levels $0.05$ or $0.01$, respectively. Best model within each group is highlighted in bold.}}
\label{tab_Fashion_MNIST_b}
\scriptsize{
\begin{center}
\begin{tabular}{llcc}
\toprule
\multirow{2}{*}{\bf Model}                           & \multirow{2}{*}{\bf Activation}        & \multirow{2}{*}{\bf Parameter}             & {\bf Test}\\
&        &             & {\bf Accuracy}\\
\midrule
LeNet                           & ReLU		        & -	                    & 90.84\\
LeNet                           & RBF 		        & -                     & 88.94\\
LeNet                           & Swish             & $\beta=$learnable     & 90.47\\
LeNet                           & ADA 		        & $\alpha=0.3$          & {\bf 91.49}$^\dagger$\\
\midrule
LeNet                           & leaky ReLU	    & -                     & 90.88\\
LeNet                           & leaky ADA 	    & $\alpha=0.9$	        & {\bf 91.38}$^\dagger$\\
\midrule
LeNet+PyNReLU                   & ReLU, ReLU 	    &-   	                & 90.87\\
LeNet+PyNRBF                    & ReLU, RBF 	    & -	                    & 90.43\\
LeNet+PyNSwish                  & ReLU, Swish 	    & $\beta=$learnable     & 91.04\\
LeNet+PyNADA                    & ReLU, ADA 	    & $\alpha=0.3$	        & 91.27\\
LeNet+PyNADA                    & leaky ReLU, leaky ADA & $\alpha=$learnable & {\bf 91.60}$^\ddagger$\\
\toprule
VGG-9                          & ReLU		        & -	            	    & 93.39\\
VGG-9                          & RBF 		        & -              	    & 10.00\\
VGG-9                          & Swish		        & $\beta=$learnable     & 91.96\\
VGG-9                          & ADA 		        & $\alpha=1.0$          & {\bf 93.84}$^\ddagger$\\
\midrule
VGG-9                          & leaky ReLU	    & -             	        & 93.26\\
VGG-9                          & leaky ADA 	    & $\alpha=0.3$	            & {\bf 93.63}$^\dagger$\\
\midrule
VGG-9+PyNReLU                   & ReLU, ReLU 	    & - 	                & 93.49\\
VGG-9+PyNRBF                    & ReLU, RBF 	    & -	                    & 10.00\\
VGG-9+PyNSwish                  & ReLU, Swish 	    & $\beta=$learnable     & 93.70\\
VGG-9+PyNADA                   & ReLU, ADA 	    & $\alpha=1.0$	            & {\bf 93.73}\\
VGG-9+PyNADA                   & leaky ReLU, leaky ADA & $\alpha=1.0$  	    & 93.70\\
\bottomrule
\end{tabular}
\end{center}
}
%\vskip -0.12in
\end{table}

\begin{figure}[!t]
%\vskip 0.2in
\begin{center}
\centerline{\includegraphics[width=0.85\columnwidth]{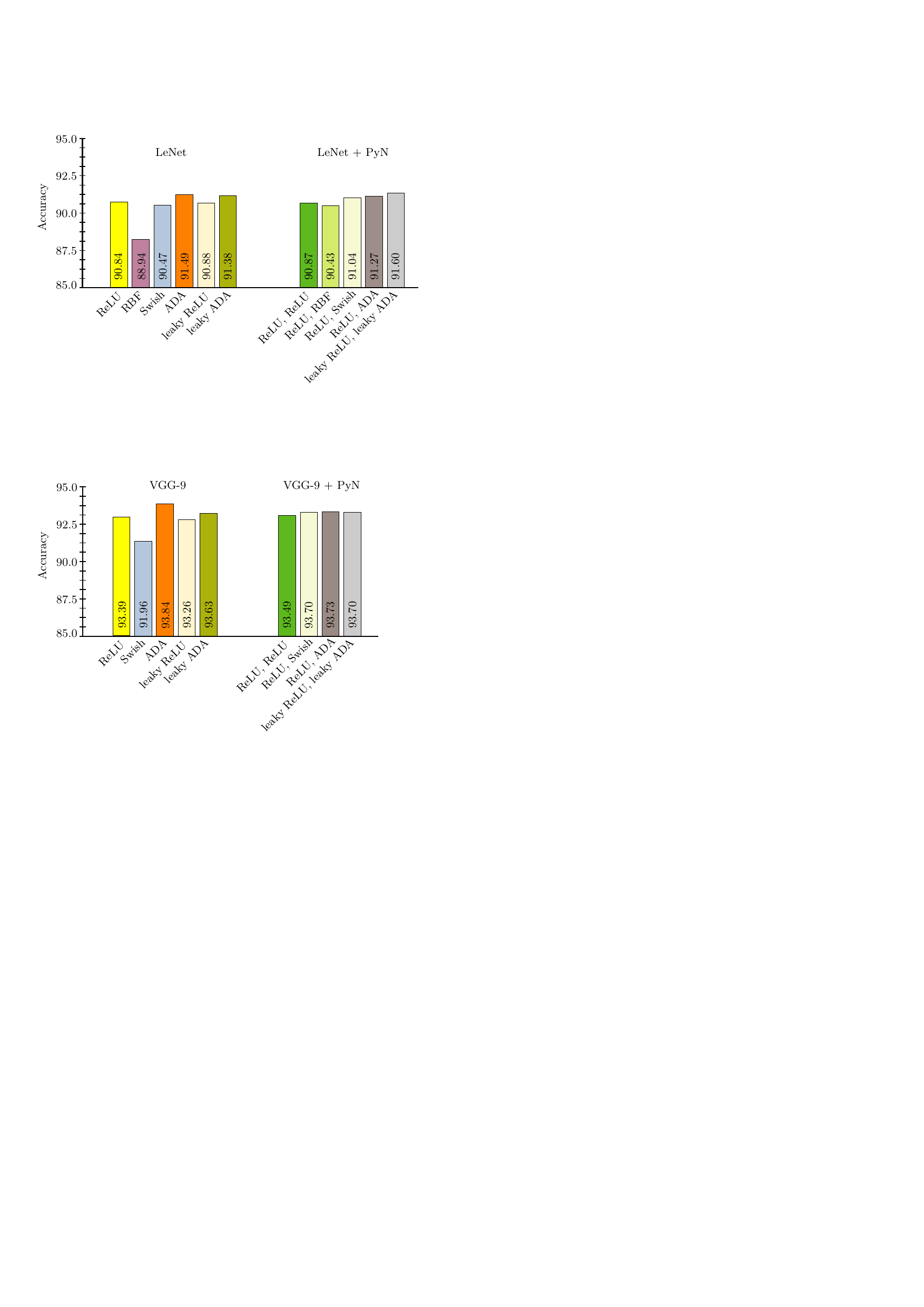}}
%\vskip -0.05in
\caption{Bar chart of the classification results on the Fashion-MNIST data set obtained by different versions of the LeNet architecture with both standard and pyramidal neurons activated by various functions: ReLU, RBF, Swish, ADA, leaky ReLU and leaky ADA. Best viewed in color.}
\label{fig_tab_fashion_lenet}
\end{center}
%\vskip -0.35in
\end{figure}

\begin{figure}[!t]
%\vskip 0.2in
\begin{center}
\centerline{\includegraphics[width=0.85\columnwidth]{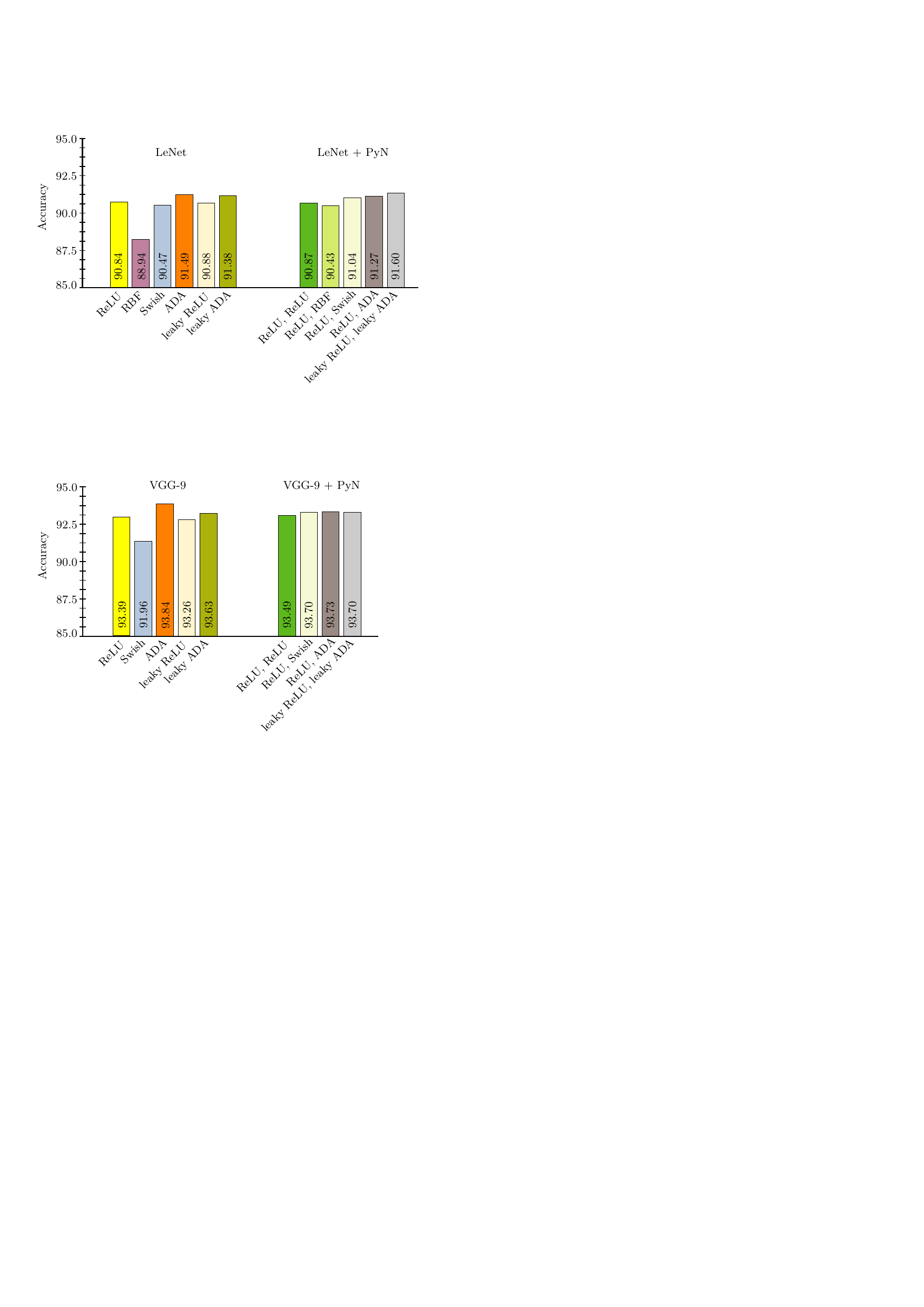}}
%\vskip -0.05in
\caption{Bar chart of the classification results on the Fashion-MNIST data set obtained by different versions of the VGG-9 architecture with both standard and pyramidal neurons activated by various functions: ReLU, RBF, Swish, ADA, leaky ReLU and leaky ADA. Best viewed in color.}
\label{fig_tab_fashion_vgg}
\end{center}
%\vskip -0.35in
\end{figure}

\subsection{Results on Fashion-MNIST}
\label{sec_results_fashion}

% \vspace{-0.15cm}
\noindent {\bf Neural architectures.}
For the Fashion-MNIST data set, we consider two MLPs and two CNNs (LeNet, VGG-9).
We kept the same design choices as in the original paper \citep{Simonyan-ICLR-2014} for VGG-9, but for LeNet \citep{LeCun-PI-1998}, we replaced the average-pooling layer with max-pooling. %, otherwise we did not modify the architecture. 
The first MLP architecture (MLP-1) is composed of one hidden layer with 100 units and one output layer with 10 units (the number of classes). The second MLP has two hidden layers with 100 units and 10 units, respectively, followed by the output layer with another 10 units. The considered MLP architectures are similar to those attaining better results among the MLP architectures evaluated by Xiao et al.~\cite{Xiao-A-2017}.

% \vspace{-0.15cm}
\noindent {\bf Specific hyperparameter tuning.}
We train LeNet for $30$ epochs, using a learning rate of $10^{-3}$ for the first $15$ epochs and $10^{-4}$ for the last $15$ epochs. We train MLP-1 and MLP-2 in the same manner as LeNet. However, VGG-9 is trained for $100$ epochs, starting with a learning rate of $10^{-4}$ in the first $50$ epochs, decreasing it to $10^{-5}$ in the last 50 epochs. We train all models on mini-batches of $64$ images. In all the experiments with (leaky) ADA or (leaky) PyNADA, we either validate or learn $\alpha$, and we set the parameter $c$ to $0$.

% \vspace{-0.15cm}
\noindent {\bf Results.}
We present the Fashion-MNIST results with various neural architectures and activation functions in Table \ref{tab_Fashion_MNIST_a} and Table \ref{tab_Fashion_MNIST_b}. In addition, we illustrate the results with both standard and pyramidal neurons in Figure \ref{fig_tab_fashion_1h} (for the one-hidden-layer MLP architecture), Figure \ref{fig_tab_fashion_2h} (for the two-hidden-layer MLP architecture), Figure \ref{fig_tab_fashion_lenet} (for the LeNet architecture) and Figure \ref{fig_tab_fashion_vgg} (for the VGG-9 architecture).

Our baseline MLP architectures obtain better accuracy rates than those of Xiao et al.~\cite{Xiao-A-2017}, e.g.~the difference obtained for MLP-1 with ReLU is $1.78\%$ (we report $88.88\%$, while Xiao et al.~\cite{Xiao-A-2017} report $87.10\%$).
We notice that leaky ReLU obtains slightly lower accuracy rates than ReLU, the only exception being the result with LeNet. %When we replace ReLU with ADA, we notice a slight accuracy drop on the validation set for the MLP architectures, but, for the CNN architectures, the validation accuracy improves. 
Nonetheless, for both MLP and CNN architectures, the accuracy of ADA on the test set is superior (by up to $0.5\%$) compared to the accuracy of ReLU. We obtain statistically significant improvements for the replacement of ReLU with ADA, from $90.84\%$ to $91.34\%$ using LeNet, and from $93.39\%$ to $93.84\%$ using VGG-9, respectively. Interestingly, we also noticed that the value of the cross-entropy loss is always lower (on both validation and test sets) when we use ADA instead of ReLU. 

We observe that RBF converges only for small networks (MLP-1, MLP-2 or LeNet). When it converges, the results are below the ReLU and leaky ReLU baselines. For VGG-9, the accuracy of RBF is equal to the random choice baseline. Swish attains better results than ReLU for MLP-1 and MLP-2, but below ADA and leaky ADA. For LeNet and VGG-9, Swish surpasses only RBF (all other activation functions are better). When we employ our PyNADA, the accuracy rates improve for all the architectures. The largest improvement of leaky PyNADA on the test set (with respect to the baseline PyNReLU) is $0.73\%$, obtained with LeNet. The reported difference is statistically significant. 

%Our highest absolute gain on Fashion-MNIST is $1.09\%$, obtained using MLP-2 with leaky PyNADA ($89.27\%$) instead of leaky ReLU ($88.18\%$). 

Finally, we emphasize that the results presented in Table \ref{tab_Fashion_MNIST_a}, based on the one-hidden-layer MLP and two-hidden-layer MLP architectures, confirm that the complexity of the network can be reduced by using ADA, leaky ADA, PyNADA or leaky PyNADA. For example, the one-hidden-layer MLP based on ADA ($88.98\%$) outperforms the deeper two-hidden-layer MLPs based on ReLU ($88.71\%$), tanh ($86.30\%$), RBF ($87.84\%$) and Swish ($88.92\%$). Similarly, the one-hidden-layer MLP based on leaky ADA ($88.97\%$) outperforms the deeper two-hidden-layer MLP based on leaky ReLU ($88.18\%$). Moreover, the one-hidden-layer MLPs based on PyNADA ($89.45\%$) and leaky PyNADA ($89.34\%$) outperform the deeper two-hidden-layer MLPs based on PyNReLU ($88.98\%$), PyNRBF ($88.43\%$) and PyNSwish ($88.88\%$).

\subsection{Results on Tiny ImageNet}
\label{sec_results_tiny}

\begin{table*}[!t]
\caption{Object class recognition accuracy rates (in \%) for ResNet-18 on Tiny ImageNet. Results are reported with two artificial neurons, PyNReLU and PyNADA, respectively. Results significantly better than the baseline, according to a paired McNemar's test \citep{Dietterich-NC-1998}, are marked with $\ddagger$ for the significance level $0.01$. Best model is highlighted in bold.}
\label{tab_Tiny}
\small{
\begin{center} 
\begin{tabular}{llcc}
\toprule
{\bf Model}                           & {\bf Activation}        & {\bf Parameter}             & {\bf Test Accuracy}\\
\midrule
ResNet-18+PyNReLU                   & ReLU, ReLU 	 & - 	                & 51.54\\
ResNet-18+PyNRBF                    & ReLU, RBF 	    & -	                    & 51.99\\
ResNet-18+PyNSwish                  & ReLU, Swish 	    & $\beta=$learnable     & 52.93\\
ResNet-18+PyNADA                    & ReLU, ADA 	 & $\alpha=0.1$	        & {\bf 53.86}$^\ddagger$\\ 
\bottomrule
\end{tabular}
\end{center}
}
\end{table*}

We note that, in order to follow exactly the connectivity of pyramidal neurons, the apical dendrites should be longer, i.e.~not connected to neurons in the immediately preceding layer $i\!-\!1$, but to neurons in layers $i\!-\!2$, $i\!-\!3$ or perhaps even further apart. With this design change, the apical dendrites will act as a kind of skip-connections. We hereby show some results in this direction, although this aspect can be studied considering various architecture configurations in future work. Our goal is to prove that the biological design of pyramidal neurons is viable for artificial neural networks as well.

\noindent {\bf Neural architecture.}
For object recognition on Tiny ImageNet, we consider a ResNet-18 architecture \citep{He-CVPR-2016} without standard skip-connections. Instead of standard skip-connections, we use PyNADA with longer apical dendrites, closely modeling the biological pyramidal neurons. Using longer apical dendrites introduces new parameters (e.g., where to connect the apical dendrites) and constraints (the pyramidal design does not apply to every network architecture). The ResNet-18 model allows us to connect the apical dendrites to layer $i\!-\!3$ instead of the immediately preceding layer $i\!-\!1$. As a first baseline for this experiment, we use PyNReLU with equally-long apical dendrites, but with ReLU instead of ADA for the apical tuft. The ResNet-18 with PyNReLU is similar to ResNet-18 with standard skip-connections, the difference being that the skip-connections have learnable weights and ReLU activations. We additionally consider two more baselines with similar design: PyNRBF and PyNSwish.

\noindent {\bf Specific hyperparameter tuning.}
The ResNet-18 models with PyNReLU, PyNRBF, PyNSwish and PyNADA are each trained for $120$ epochs using a learning rate of $10^{-3}$ and mini-batches of $200$ samples. 
For PyNADA, we obtain optimal results with $c=0$ and $\alpha=0.1$ (obtained through validation). For Swish, the parameter $\beta$ is learnable.

\noindent {\bf Results.}
We present the object recognition results on Tiny ImageNet in Table~\ref{tab_Tiny}. First of all, we note that, without pre-training on ImageNet, the accuracy rates on Tiny ImageNet reported in literature are typically around $50\%$ or $60\%$. The baseline PyNReLU attains a fair accuracy of $51.54\%$. PyNADA brings a statistically significant improvement of $2.32\%$ over the baseline. The accuracy rates reached by PyNRBF and PyNSwish are between the accuracy rates of PyNReLU and PyNADA. This experiment shows that it is useful to consider longer apical dendrites in conjunction with ADA, as observed in biology.

\subsection{Results on ImageNet}
\label{sec_results_imagenet}

\begin{table*}[!t]
\caption{Object class recognition accuracy rates (in \%) for top 5 predictions of a ResNet-50 on ImageNet. Results are reported with ReLU and ADA activations and two artificial neurons, PyNReLU and PyNADA, respectively. Results significantly better than the corresponding baseline, according to a paired McNemar's test \citep{Dietterich-NC-1998}, are marked with $\ddagger$ for the significance level $0.01$. Best model within each group is highlighted in bold.}
\label{tab_ImageNet}
\small{
\begin{center} 
\begin{tabular}{llcc}
\toprule
{\bf Model}                           & {\bf Activation}        & {\bf Parameter}             & {\bf Test Accuracy}\\
\midrule
ResNet-50                   & ReLU 	                    & - 	                &  79.31\\
ResNet-50                   & ADA 	                    & $\alpha=0.2$ 	                &  {\bf 79.52}$^\ddagger$ \\
% ResNet-50                   & leaky ADA 	            & 0.1 	                & \\
% ResNet-50                   & leaky ReLU 	            & 0.1 	                & \\
\midrule
ResNet-50+PyNReLU           & ReLU, ReLU 	                & - 	                &  79.88 \\
ResNet-50+PyNADA           & ReLU, ADA 	                & $\alpha=0.2$ 	                & {\bf 80.07}$^\ddagger$ \\

% ResNet-50+PyNADA           & leaky ReLU, leaky ADA 	    & 0.1 	                & \\
\bottomrule
\end{tabular}
\end{center}
}
\end{table*}

\noindent {\bf Neural architecture.}
For object recognition on ImageNet, we employ the ResNet-50 architecture \citep{He-CVPR-2016}. Since one experiment takes about two weeks on our machine (equipped with two Nvidia GeForce GTX 1080 GPUs with 11GB of RAM), we restrict our comparisons to ReLU versus ADA on the one hand, and PyNReLU versus PyNADA on the other hand. For both PyNReLU and PyNADA, we connect the apical dendrites to layer $i\!-\!3$ instead of the immediately preceding layer $i\!-\!1$, thus following the same design used in the experiments carried out on Tiny ImageNet.

\noindent {\bf Specific hyperparameter tuning.}
The ResNet-50 models with ReLU, ADA, PyNReLU and PyNADA are each trained for $300$ epochs using a learning rate of $10^{-4}$ and mini-batches of $800$ samples. For ADA and PyNADA, we obtain optimal results with $c=0$ and $\alpha=0.2$ (determined through validation).

\noindent {\bf Results.}
We present the object recognition results on ImageNet in Table~\ref{tab_ImageNet}. We observe that the baseline ResNet-50 with ReLU activations attains an accuracy of $79.31\%$, while the same architecture with ADA leads to a higher accuracy of $79.52\%$. The same trend can be observed while comparing the pyramidal architectures, PyNReLU and PyNADA. More precisely, the ResNet-50 based on PyNADA surpasses the baseline ResNet-50 based on PyNReLU by $0.2\%$. These experiments shows once again that ADA and PyNADA outperform the counterparts based on ReLU and PyNReLU, respectively. Furthermore, the ImageNet experiments confirm the Tiny ImageNet experiments, indicating that longer apical dendrites (to layer $i\!-\!3$) in pyramidal neurons are useful.

\subsection{Ablation Results}
\label{sec_results_ablation}

Next, we present an ablation study to assess the influence of the hyperparameter $\alpha$ on the training stability of ADA. We also present experiments on how input noise, labeling noise or the imbalanced nature of the data set affect the performance obtained by ADA and the other competing approaches.

\noindent
{\bf Effect of $\boldsymbol{\alpha}$.} The hyperparameter $\alpha$ defined in Equation~\eqref{eq_adaf3} controls the width of the peak, meaning that it inflates or deflates the non-asymptotic region, which can alleviate the vanishing gradient problem. In order to demonstrate that the ADA function leads to stable training, we perform additional experiments on Fashion-MNIST with the MLP-2 and VGG-9 models, while varying the  parameter $\alpha$. More precisely, we vary $\alpha$ between $0.1$ and $1.0$ with a step of $0.1$, reporting the performance for each value of $\alpha$ in Figure \ref{fashion_mnist_ablation_alpha}. When varying $\alpha$, the performance of the two-hidden-layer MLP fluctuates between $87.3\%$ and $88.9\%$, while the performance of VGG-9 fluctuates between $93.5\%$ and $93.9\%$. These results indicate that the training converges to good optima, regardless of the value of $\alpha$. We thus conclude that these experiments do not show slow convergence, demonstrating that the possibility of encountering vanishing gradients is not to be expected.

\begin{figure} 
%\vskip 0.2in
\begin{center}
\centerline{\includegraphics[width=1.0\columnwidth]{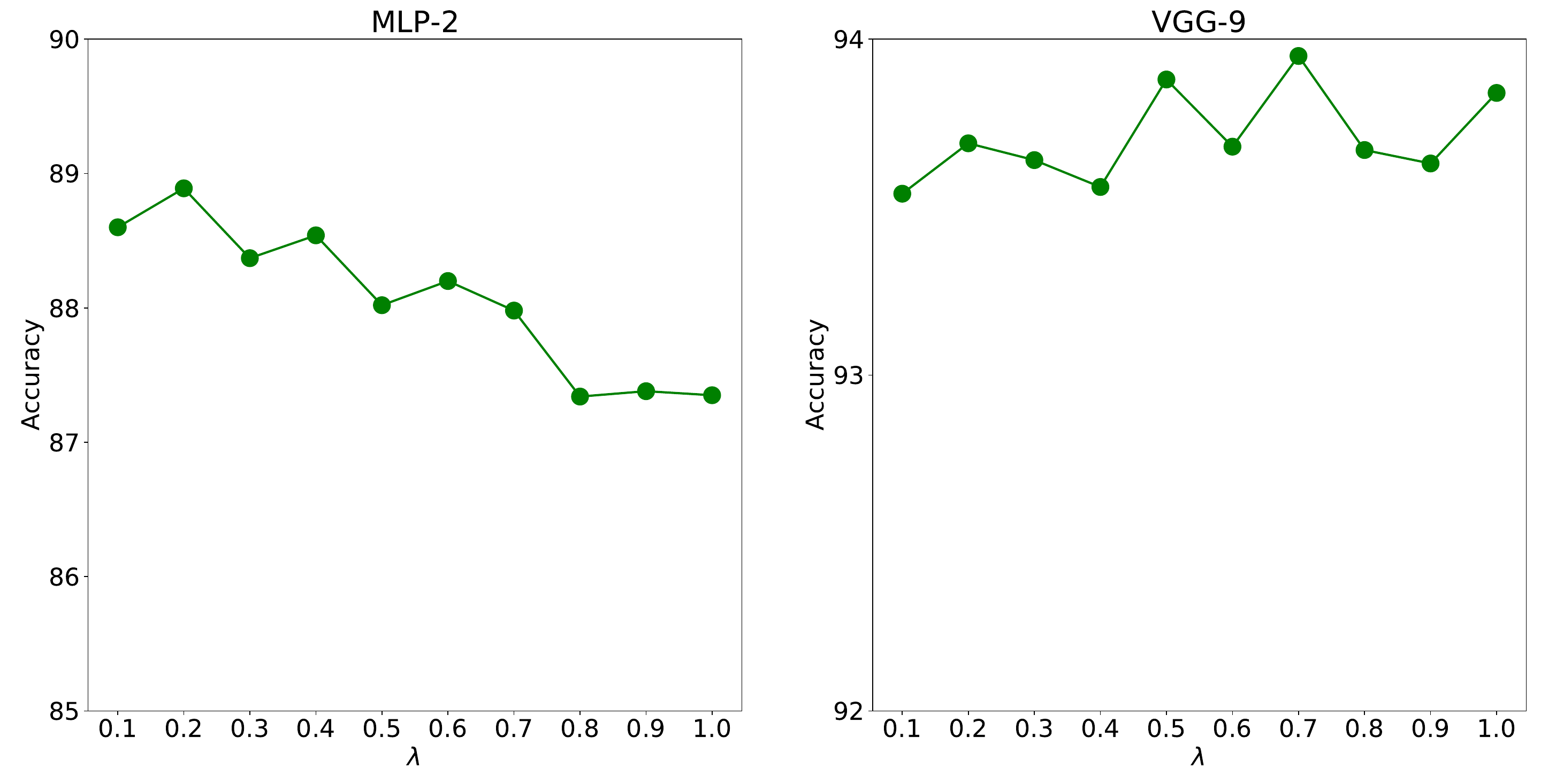}}
%\vskip -0.05in
\caption{Object class recognition accuracy rates (in \%) for two neural models (MLP-2 and VGG-9) on Fashion-MNIST. Results are reported with the ADA function, while varying the hyperparameter $\alpha$ defined in Equation~\eqref{eq_adaf3}. The training converges for all values of $\alpha$, without showing signs of slow convergence or vanishing gradients.}\label{fashion_mnist_ablation_alpha}
\end{center}
%\vskip -0.35in
\end{figure}

\begin{figure}  
\begin{center}
\centerline{\includegraphics[width=1.0\columnwidth]{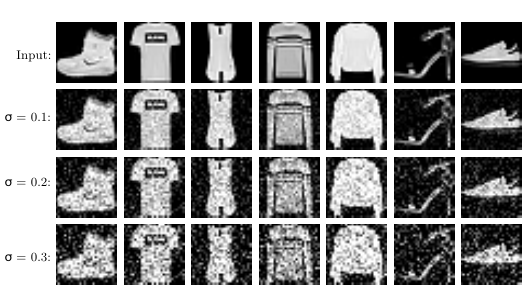}}
\caption{Examples of input images from Fashion-MNIST, with various levels of Gaussian noise. We apply Gaussian noise of $0$ mean and different standard deviations (shown on different rows).}\label{noisy_input_samples_fashionmnist} 
\end{center}
%\vskip -0.35in
\end{figure}

\begin{figure} 
%\vskip 0.2in
\begin{center}
\centerline{\includegraphics[width=1.0\columnwidth]{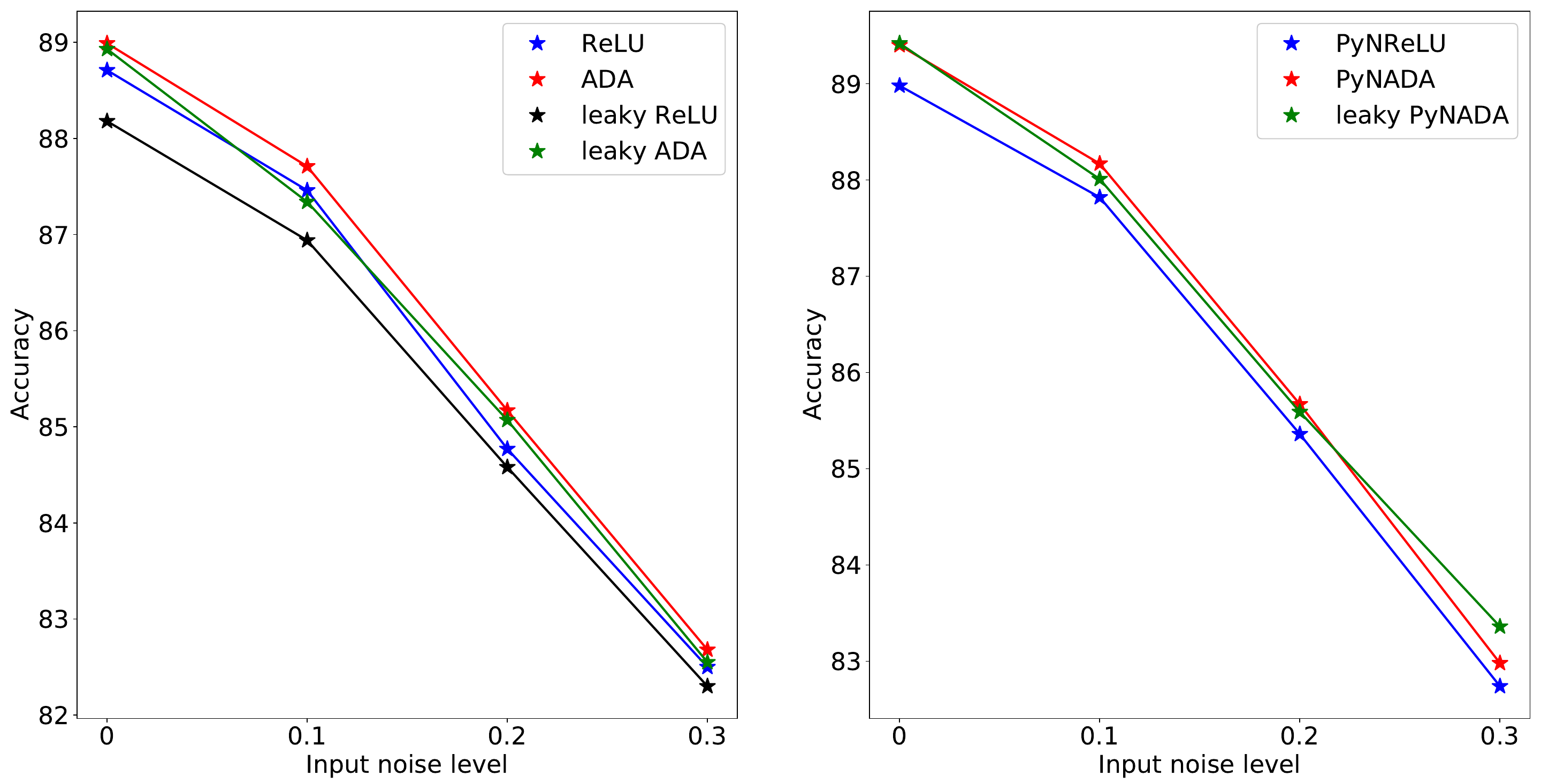}}
%\vskip -0.05in
\caption{Object class recognition accuracy rates (in \%) for a two-hidden-layer MLP on Fashion-MNIST, with various levels of Gaussian noise applied on the input images. Results are reported with various activations (ReLU, leaky ReLU, ADA, leaky ADA) and artificial neurons (standard, PyNReLU, PyNADA and leaky PyNADA). We add Gaussian noise of $0$ mean and different standard deviations (illustrated on the horizontal axis). Best viewed in color.}\label{noise_input_mlp_fashion_mnist}
\end{center}
%\vskip -0.35in
\end{figure}

\begin{figure} 
%\vskip 0.2in
\begin{center}
\centerline{\includegraphics[width=1.0\columnwidth]{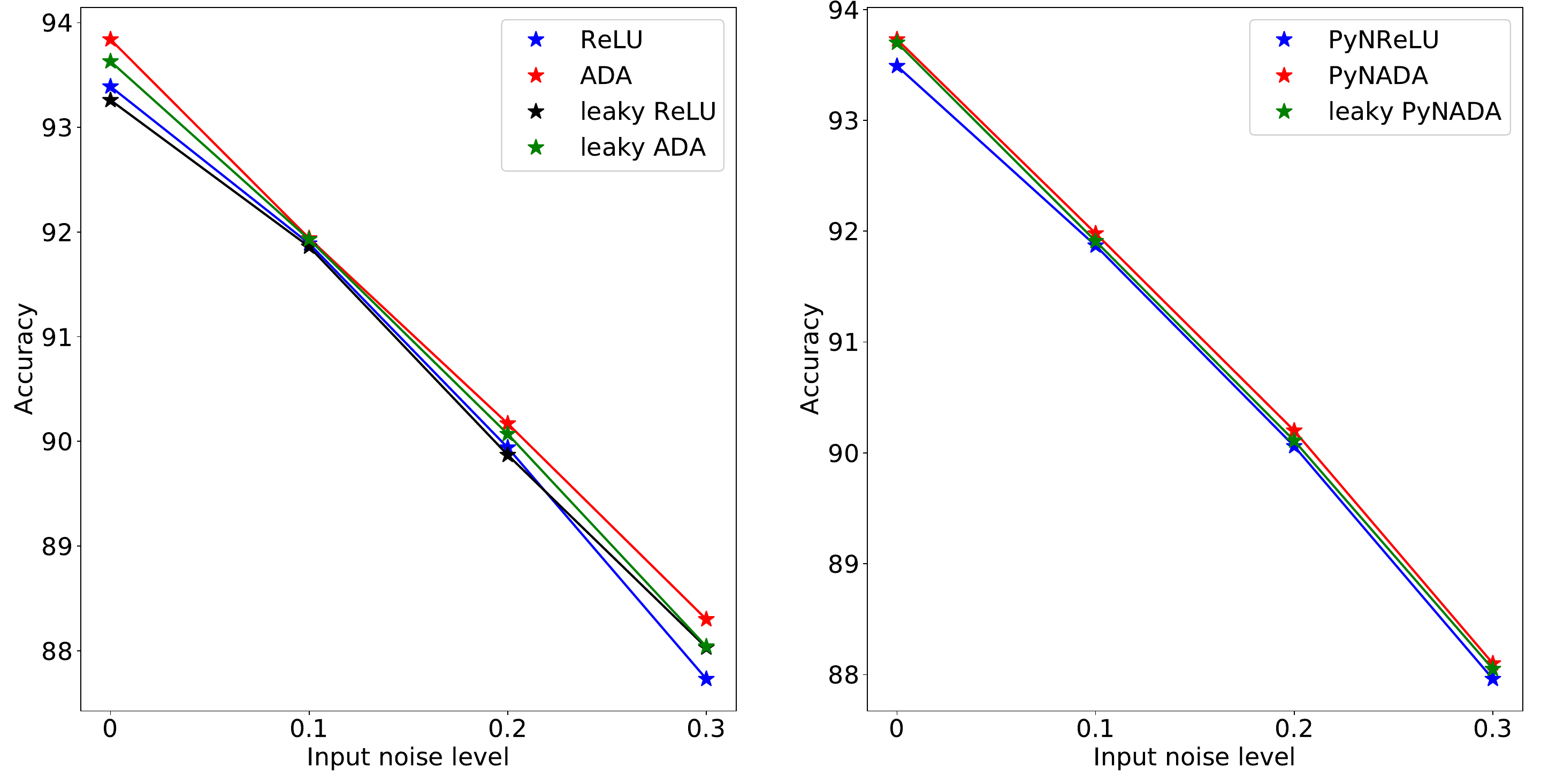}}
%\vskip -0.05in
\caption{Object class recognition accuracy rates (in \%) for a two-hidden-layer MLP on Fashion-MNIST, with various levels of Gaussian noise applied on the input images. Results are reported with various activations (ReLU, leaky ReLU, ADA, leaky ADA) and artificial neurons (standard, PyNReLU, PyNADA and leaky PyNADA). We add Gaussian noise of $0$ mean and different standard deviations (illustrated on the horizontal axis). Best viewed in color.}\label{noise_input_vgg_fashion_mnist}
\end{center}
%\vskip -0.35in
\end{figure}

\noindent
{\bf Effect of input noise.}  We study the behavior of ADA and PyNADA when training and testing on noisy images by performing experiments on the Fashion-MNIST data set with two different models, namely MLP-2 and VGG-9. Fashion-MNIST is a clean data set, meaning that the image samples do not contain  noise. To simulate a data set of noisy images, we add Gaussian noise of $0$ mean and different standard deviations, namely $\sigma \in \{0.1, 0.2, 0.3 \}$. We can visualize some randomly chosen images after applying the noise with different standard deviations in Figure \ref{noisy_input_samples_fashionmnist}. We carry out experiments to compare the ADA and leaky ADA functions with ReLU and leaky ReLU, respectively. Likewise, we compare PyNReLU with PyNADA and leaky PyNADA. The corresponding results are presented in Figure \ref{noise_input_mlp_fashion_mnist} and Figure \ref{noise_input_vgg_fashion_mnist}. We observe that the performance of ADA surpasses the performance of ReLU, regardless of the input noise level and the underlying architecture. The same applies to the comparison between PyNReLU and PyNADA. When it comes to classifying noisy input images, we conclude that ADA and PyNADA maintain their competitive edge over ReLU and PyNReLU.

\noindent
{\bf Effect of labeling noise.} To analyze the behavior of ADA and PyNADA when training on noisy labels, we carry out experiments on Fashion-MNIST with two models, MLP-2 and VGG-9. Fashion-MNIST is a well annotated data set. Therefore, we randomly change the labels of a certain fraction ($0.1$, $0.2$ or $0.3$) of the training data set to simulate noisy labeling. The corresponding results are presented in Figure \ref{noise_mlp_fashion_mnist} and Figure \ref{noise_vgg_fashion_mnist}. We perform experiments with ReLU, leaky ReLU, ADA and leaky ADA as activation functions. We also carry out experiments with PyNReLU, PyNADA, and leaky PyNReLU. We observe that ADA and leaky ADA consistently outperform ReLU and leaky ReLU, regardless of the noise level, for both neural networks. The same observation applies when comparing PyNReLU with PyNADA and leaky PyNADA. We conclude that both ADA and PyNADA keep their competitive edge over ReLU and PyNReLU, in the presence of noisy labels.

\begin{figure}  
\begin{center}
\centerline{\includegraphics[width=1.0\columnwidth]{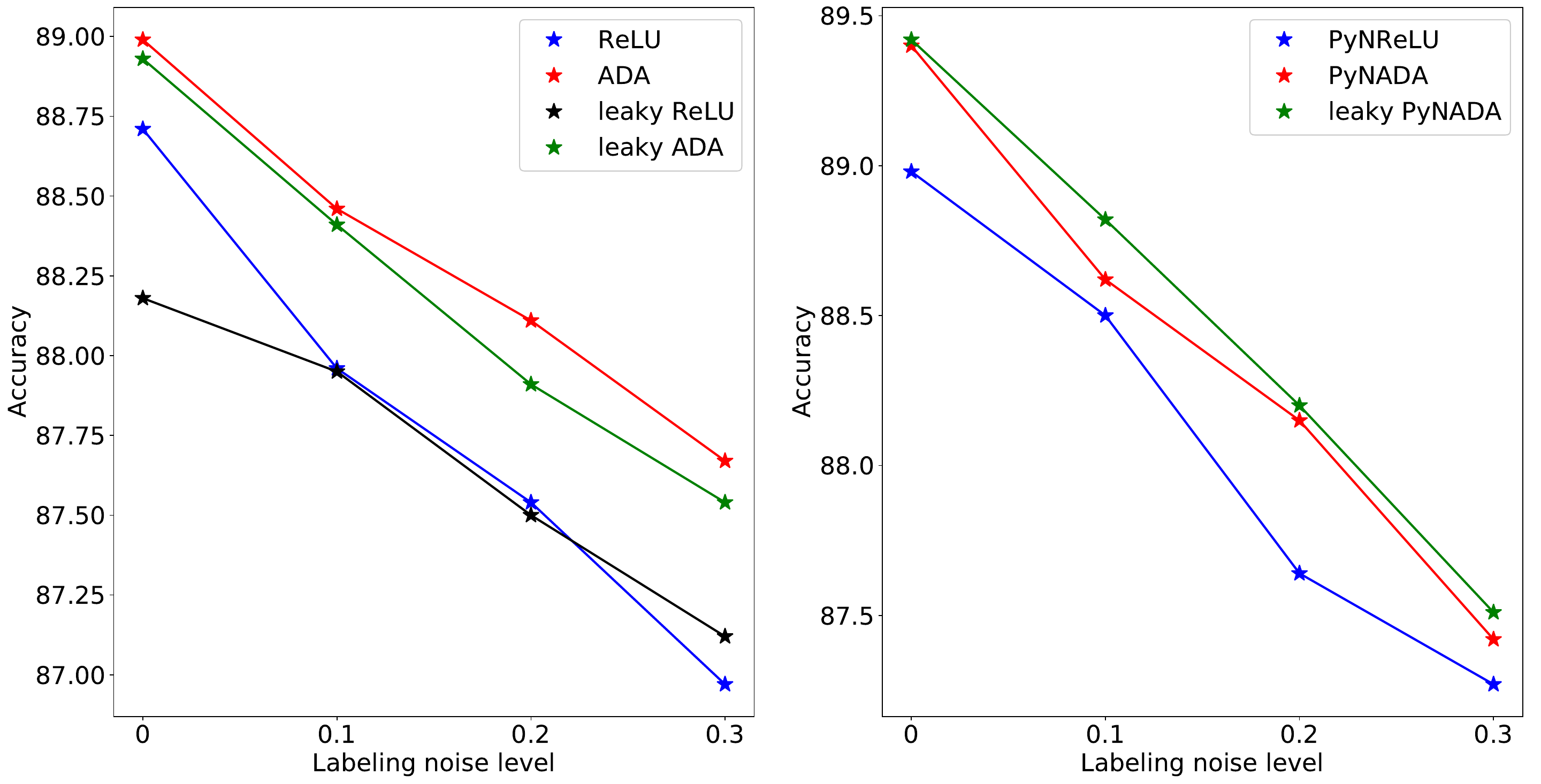}}
\caption{Object class recognition accuracy rates (in \%) for a two-hidden-layer MLP on Fashion-MNIST, with various levels of labeling noise. Results are reported with various activations (ReLU, leaky ReLU, ADA, leaky ADA) and artificial neurons (standard, PyNReLU, PyNADA and leaky PyNADA). We simulate the labeling noise by randomly changing the labels for several ratios (illustrated on the horizontal axis) of the training set. Best viewed in color.}\label{noise_mlp_fashion_mnist} 
\end{center}
%\vskip -0.35in
\end{figure}

\begin{figure} 
%\vskip 0.2in
\begin{center}
\centerline{\includegraphics[width=1.0\columnwidth]{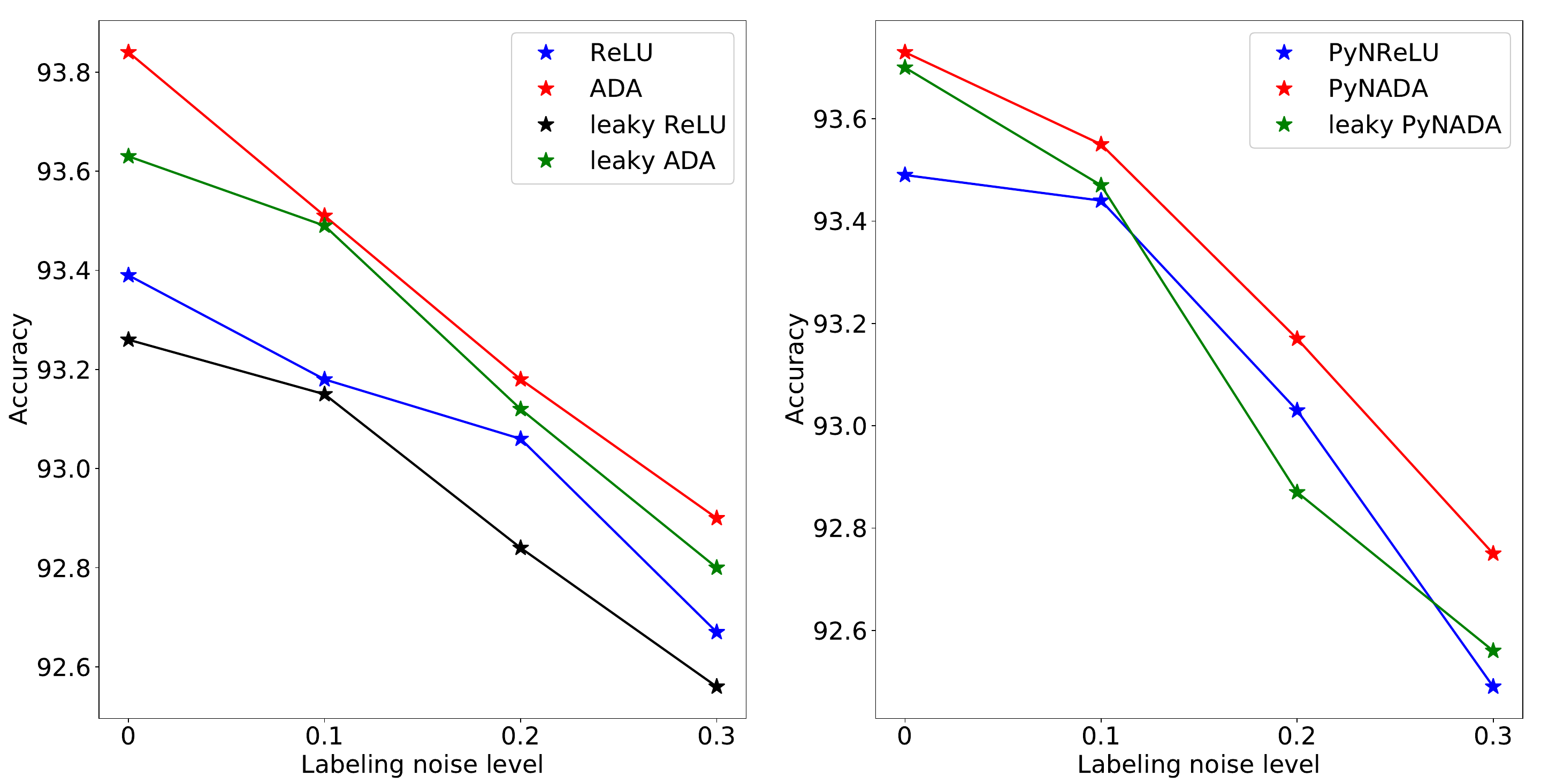}}
%\vskip -0.05in
\caption{Object class recognition accuracy rates (in \%) for VGG-9 on Fashion-MNIST, with various levels of labeling noise. Results are reported with various activations (ReLU, leaky ReLU, ADA, leaky ADA) and artificial neurons (standard, PyNReLU, PyNADA and leaky PyNADA). We simulate the labeling noise by randomly changing the labels for several ratios (illustrated on the horizontal axis) of the training set. Best viewed in color.}\label{noise_vgg_fashion_mnist}
\end{center}
%\vskip -0.35in
\end{figure}

\noindent
{\bf Effect of data imbalance.} We investigate the effect of data imbalance when using ADA and PyNADA by performing experiments on Fashion-MNIST with the same models as above, namely MLP-2 and VGG-9. Fashion-MNIST is a balanced data set, meaning that the samples are evenly distributed among classes. To simulate an imbalanced version of Fashion-MNIST, we randomly choose $5$ classes (out of $10$) and keep only $10\%$ of the data samples for the selected classes. We report the \textit{weighted accuracy} for this experiment, to account for the imbalanced nature of the data set. We perform experiments with ReLU, leaky ReLU, ADA and leaky ADA as activation functions, and PyNReLU, PyNADA and leaky PyNADA as neurons, respectively. The corresponding results are presented in Figure \ref{imbalanced_fashion_mnist}. The evaluation shows that ADA and leaky ADA surpass ReLU and leaky ReLU, regardless of the underlying architecture (MLP-2 or VGG-9). Similarly, PyNADA and leaky PyNADA outperform PyNReLU. Overall, the results confirm that the proposed ADA and PyNADA keep their competitive edge over ReLU and PyNReLU, when it comes to solving classification tasks suffering from data imbalance.

\begin{figure} 
%\vskip 0.2in
\begin{center}
\centerline{\includegraphics[width=1.0\columnwidth]{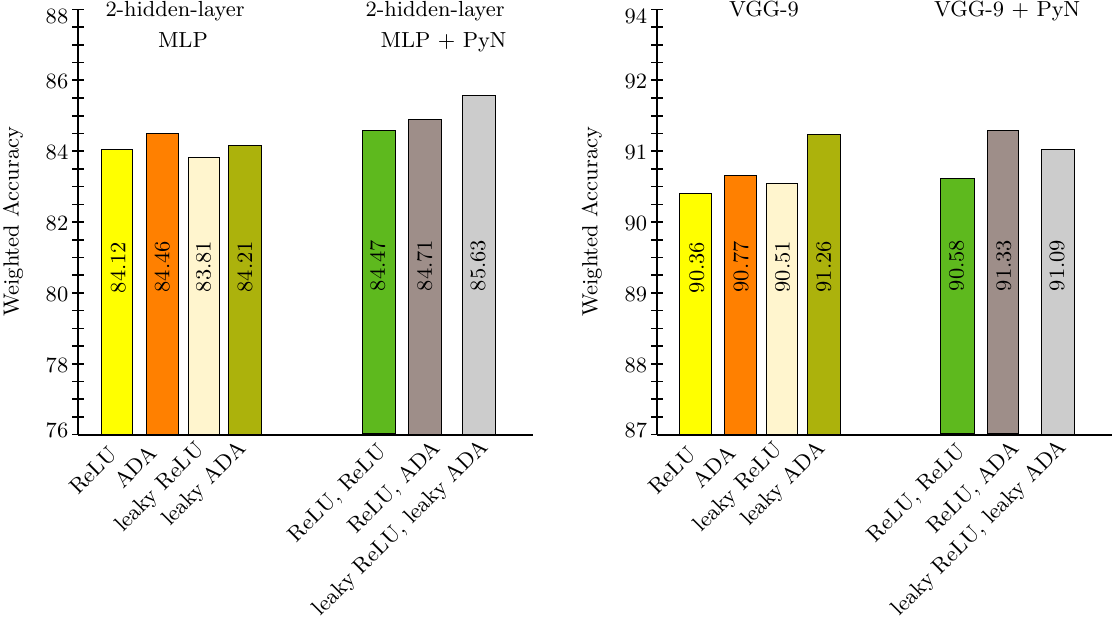}}
%\vskip -0.05in
\caption{Weighted accuracy rates (in \%) for two neural models (MLP-2 and VGG-9) on a data imbalanced version of Fashion-MNIST. Results are reported with various activations (ReLU, leaky ReLU, ADA, leaky ADA) and artificial neurons (standard, PyNReLU, PyNADA and leaky PyNADA). We simulate the imbalanced version of the data set by randomly dropping $90\%$ of the samples for $5$ out of $10$ classes.}\label{imbalanced_fashion_mnist}
\end{center}
%\vskip -0.35in
\end{figure}

\subsection{Results on Nonlinear Function Approximation}
\label{sec_results_nfa}

In Figure \ref{fig_sin} and Figure \ref{fig_sinc}, we compare the nonlinear approximation capabilities of our ADA and leaky ADA functions against ReLU, leaky ReLU, RBF and Swish. More precisely, we vary the activation functions of a simple one-hidden-layer MLP with $20$ hidden neurons and one output neuron, and employ the resulting models to approximate two nonlinear functions, $\sin(x)$ and $\sinc(x) = \frac{\sin(x)}{x}$, where $x$ is taken between $[-2\pi, 2\pi]$. When it comes to approximating these functions, ADA and leaky ADA outperform RBF, ReLU and leaky ReLU. Swish is able to output the best approximation for $\sin(x)$, while ADA and leaky ADA are better approximators of the $\sinc(x)$ function. We underline that the chosen neural model is restricted to a basic shallow architecture with only one hidden layer of 20 neurons, thus emphasizing the approximation capabilities of the activation functions rather than the neural architecture itself. Certainly, all compared activation functions can provide much better approximations with deeper and wider architectures, but an empirical study in this direction goes beyond the scope of this work. All in all, our results highlight the intrinsic nonlinear property of the proposed ADA and leaky ADA functions.

\begin{figure}[!t]
%\vskip 0.2in
\begin{center}
\centerline{\includegraphics[width=1.\columnwidth]{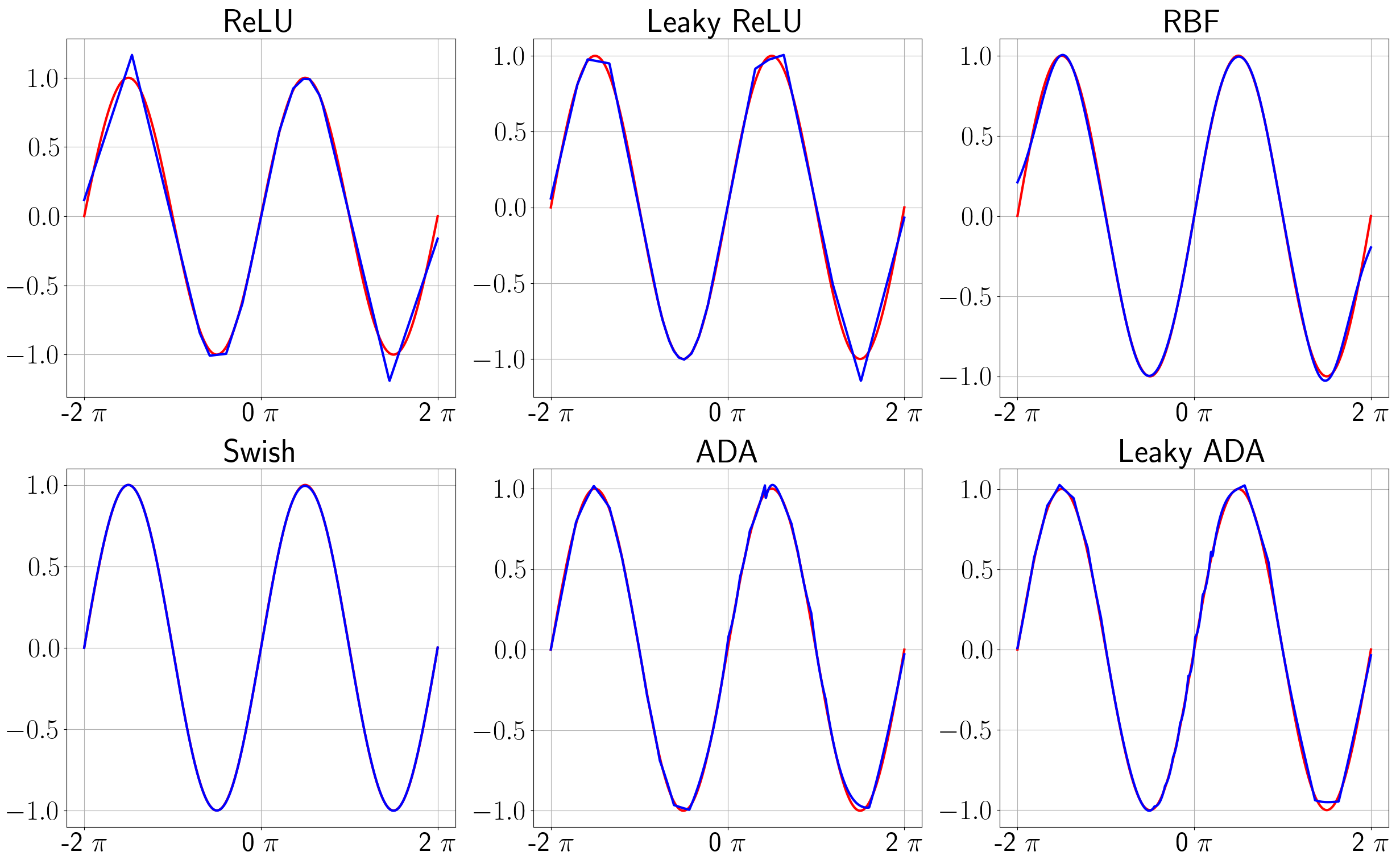}}
%\vskip -0.05in
\caption{Approximating the $\sin(x)$ function with a one-hidden-layer MLP based on various activation functions. The red graph represents the ground-truth, while the blue graph represents the approximation. Best viewed in color.}
\label{fig_sin}
\end{center}
%\vskip -0.35in
\end{figure}

\begin{figure}[!t]
%\vskip 0.2in
\begin{center}
\centerline{\includegraphics[width=1.\columnwidth]{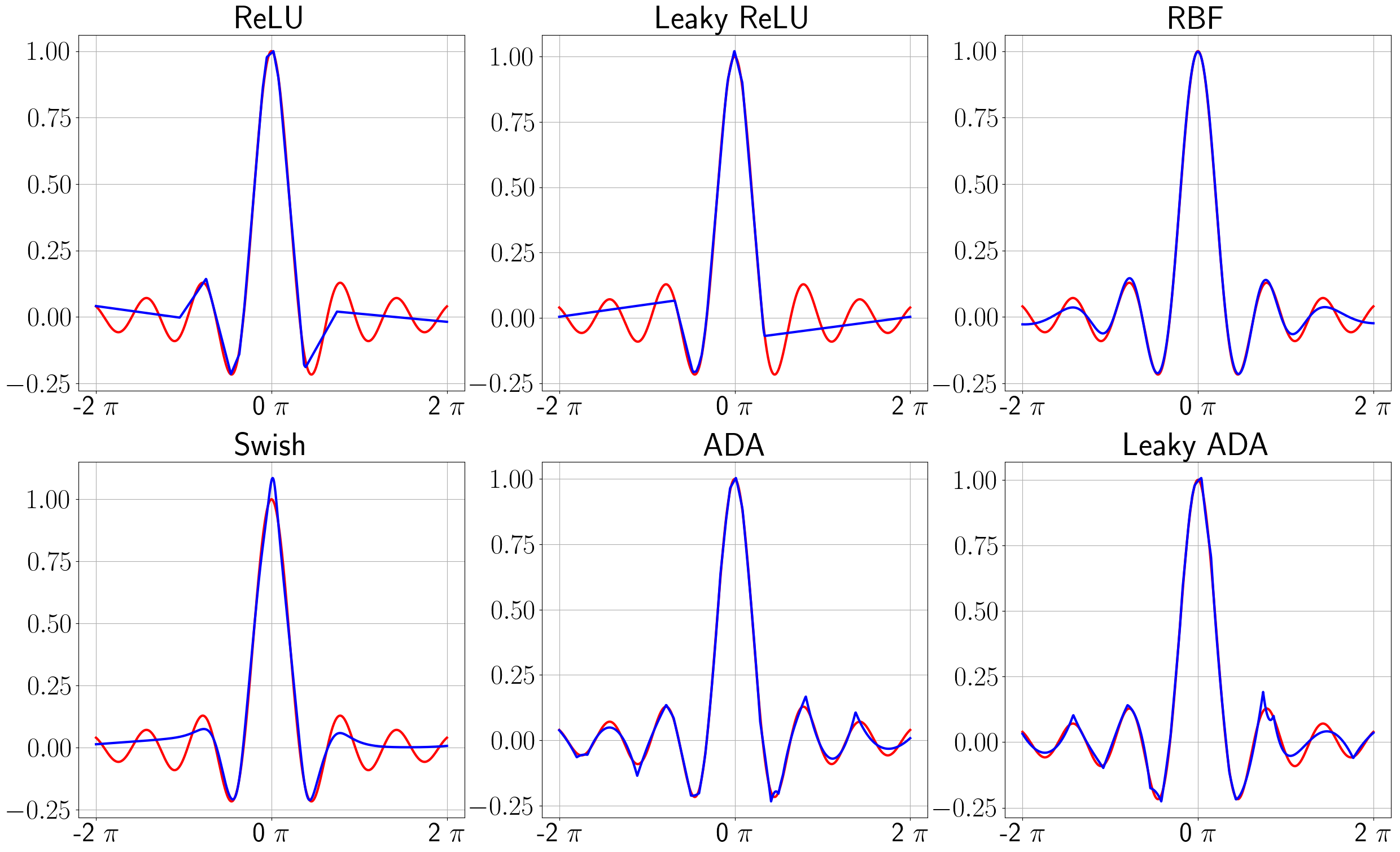}}
%\vskip -0.05in
\caption{Approximating the $\sinc(x)$ function with a one-hidden-layer MLP based on various activation functions. The red graph represents the ground-truth, while the blue graph represents the approximation. Best viewed in color.}
\label{fig_sinc}
\end{center}
%\vskip -0.35in
\end{figure}

\subsection{Discussion}
\label{sec_discussion}

We demonstrated the applicability of our activation function (ADA) by integrating it into different CNN (LeNet, VGG, and ResNet) and MLP models and addressing a broad range of tasks: object recognition, age estimation and gender prediction in images; dialect identification in text; emotion recognition in speech. More precisely, we performed experiments on six benchmark data sets from computer vision, signal processing and natural language processing. Our experiments show that we can successfully replace the most used activation function, namely ReLU, with ADA and obtain better performance without requiring any other change to the neural network. In general, ADA can be applied to any neural network without requiring further modifications. In a similar way, we show the benefits of using pyramidal neurons, i.e.~PyNADA. Although we selected a broad range of tasks and domains to carry out our experiments, we strongly believe that ADA and PyNADA are applicable to many other tasks and domains. This advantage stems from the fact that ADA and PyNADA are basic neural building blocks, which can be immediately integrated into most neural network architectures.

By looking at Figure \ref{fig_leaky_adaf3}, we can observe that ADA may lead to vanishing gradients, a potential drawback of the proposed function. However, we never encountered the vanishing gradient problem in practice. We performed many experiments (on six data sets, with multiple architectures) and none of the experiments showed slow convergence. Different from the well-known sigmoid and tanh activation functions, we can control the width of the peak via the parameter $\alpha$, increasing the non-asymptotic region of ADA as necessary to avoid the vanishing gradient problem. Hence, in general, it is sufficient to tune or learn the hyperparameter $\alpha$ of ADA to avoid vanishing gradients.

Another drawback of the proposed activation function is the slower computational speed with respect to ReLU, due to the use of the exponential function. Although our time measurements presented in Table \ref{tab_MOROCO} showed marginal slow downs, we recommend keeping an eye on the computational time when integrating ADA and PyNADA into very large neural models deployed in real-time applications. We underline that this drawback is not particular to our activation function, e.g.~sigmoid, tanh, Swish \cite{Ramachandran-ICLRW-2018} and ELU \cite{Clevert-ICLR-2016} also imply using the exponential function.

\section{Conclusion and Future Work}
\label{sec_conclusion}

% \vspace{-0.2cm}
In this paper, we proposed a biologically-inspired activation function and a new model of artificial neuron. The novel apical dendrite activation function $(i)$ enables individual artificial neurons to solve nonlinearly separable problems such as the XOR logical function, and $(ii)$ brings significant performance improvements for a broad range of neural architectures and tasks. Indeed, we observed consistent performance improvements over the most popular activation function, ReLU, across six benchmark data sets. The proposed ADA also outperformed other activations, RBF and Swish, that enable individual artificial neurons to solve XOR. Even though RBF and Swish share the capability of solving XOR with ADA, the accuracy rates of RBF and Swish across the six evaluation benchmarks are inconsistent, in some cases even failing to converge. 
The proposed pyramidal neural design represents another way to further boost the performance. Notably, we observed the largest performance improvements when we used ADA instead of ReLU, RBF or Swish for the apical tuft of the pyramidal neurons. In conclusion, we believe that the biologically-inspired ADA and PyNADA are useful additions to the set of deep learning building blocks. % We will release our code as open source subject to a favorable decision.

% \vspace{-0.15cm}
\noindent {\bf Future work.}
Our research also opens a few directions of future research. Since the activation dampens along the positive side of the domain, we believe it is worth investigating if ADA is more robust to out-of-distribution or adversarial examples. As the gradient saturates on the positive side, other directions of study are to inject noise into ADA to avoid saturation \citep{Gulcehre-ICML-2016} or to employ alternative optimization methods (that do not rely on gradients) in conjunction with ADA. %In future work, we could also study if ADA or PyNADA are more useful in certain layers.

\section{Acknowledgments}

The authors thank reviewers for their valuable feedback, which led to significant improvements of the manuscript.

\section{Declarations}

\subsection{Compliance with Ethical Standards}
This research does not involve human participants and/or animals.

\subsection{Funding}
This work was supported by a grant of the Romanian Ministry of Education and Research, CNCS - UEFISCDI, project number PN-III-P1-1.1-TE-2019-0235, within PNCDI III.

\subsection{Conflicts of Interest}
The authors have no competing interests to declare that are relevant to the content of this article.

\subsection{Data Availability}

The data sets used throughout the experiments are publicly available online.

%The article has also benefited from the support of the Romanian Young Academy, which is funded by Stiftung Mercator and the Alexander von Humboldt Foundation for the period 2020-2022.

%%===========================================================================================%%
%% If you are submitting to one of the Nature Portfolio journals, using the eJP submission   %%
%% system, please include the references within the manuscript file itself. You may do this  %%
%% by copying the reference list from your .bbl file, paste it into the main manuscript .tex %%
%% file, and delete the associated \verb+\bibliography+ commands.                            %%
%%===========================================================================================%%

\bibliography{sn-bibliography}% common bib file
%% if required, the content of .bbl file can be included here once bbl is generated
%%\input sn-article.bbl

%% Default %%
%%\input sn-sample-bib.tex%

\end{document}